\newcommand{\bl}[1]{\textcolor{blue}{#1}}
\definecolor{mypurple}{rgb}{.4,.0,.5}
\def\y{{\bf y}}
\def\x{{\bf x}}
\def\x{{\mathbf x}}
\def\x{{\bf x}}
\def\y{{\bf y}}
\def\z{{\bf z}}
\def\a{{\bf a}}
\def\h{{\bf h}}
\def\cL{{\mathcal L}}
\def\cG{{\mathcal G}}
\def\be{\begin{equation}}
\def\ee{\end{equation}}
\def\ba{\left[\begin{array}}
\def\ea{\end{array}\right]}
\def\x{{\bf x}}
\def\y{{\bf y}}
\def\z{{\bf z}}
\def\a{{\bf a}}
\def\1{{\bf 1}}
\def\g{{\bf g}}
\def\0{{\bf 0}}
\def\erfc{\mbox{erfc}}
\def\mR{{\mathbb R}}
\def\mE{{\mathbb E}}
\def\mP{{\mathbb P}}
\def\lp{\left (}
\def\rp{\right )}
\def\y{{\bf y}}
\def\x{{\bf x}}
\def\x{{\mathbf x}}
\def\x{{\bf x}}
\def\y{{\bf y}}
\def\z{{\bf z}}
\def\a{{\bf a}}
\def\h{{\bf h}}
\def\be{\begin{equation}}
\def\ee{\end{equation}}
\def\ba{\left[\begin{array}}
\def\ea{\end{array}\right]}
\def\x{{\bf x}}
\def\y{{\bf y}}
\def\z{{\bf z}}
\def\a{{\bf a}}
\def\R{{\bf R}}
\def\({\left (}
\def\){\right )}
\def\1{{\bf 1}}
\def\g{{\bf g}}
\def\0{{\bf 0}}
\definecolor{darkgreen}{rgb}{0, 0.4,0}
\definecolor{purplebrown}{rgb}{0.5,0.1,0.6}
\definecolor{ultclupcol}{rgb}{0.1,0.5,0.5}
\definecolor{mytrycolor}{rgb}{0.5,0.7,0.2}
\definecolor{ultclupcola}{rgb}{.5,0,.5}
\definecolor{shadebrown}{rgb}{0.1,0.1,0.9}
\definecolor{lightblue}{rgb}{0.2,0,1}
\newtcbox{\xmybox}{on line,
arc=7pt,
before upper={\rule[-3pt]{0pt}{10pt}},boxrule=0pt,
boxsep=0pt,left=6pt,right=6pt,top=0pt,bottom=0pt,enhanced, coltext=blue, colback=white!10!yellow}
\newtcbox{\xmyboxa}{on line,
arc=7pt,
before upper={\rule[-3pt]{0pt}{10pt}},boxrule=0pt,
boxsep=0pt,left=6pt,right=6pt,top=0pt,bottom=0pt,enhanced, colback=white!10!yellow}
\newtcbox{\xmyboxb}{on line,
arc=7pt,
before upper={\rule[-3pt]{0pt}{10pt}},boxrule=1pt,colframe=darkgreen!100!blue,
boxsep=0pt,left=6pt,right=6pt,top=0pt,bottom=0pt,enhanced, colback=white!10!yellow}
\newtcbox{\xmyboxc}{on line,
arc=7pt,
before upper={\rule[-3pt]{0pt}{10pt}},boxrule=.7pt,colframe=blue!100!blue,
boxsep=0pt,left=6pt,right=6pt,top=0pt,bottom=0pt,enhanced, coltext=blue, colback=white!10!yellow}
\newtcbox{\xmytboxa}{on line,
arc=7pt,
before upper={\rule[-3pt]{0pt}{10pt}},boxrule=.0pt,colframe=pink!50!yellow,
boxsep=0pt,left=6pt,right=6pt,top=0pt,bottom=0pt,enhanced, coltext=white, colback=blue!40!red}
\newtcbox{\xmytboxb}{on line,
arc=7pt,
before upper={\rule[-3pt]{0pt}{10pt}},boxrule=.0pt,colframe=pink!50!yellow,
boxsep=0pt,left=6pt,right=6pt,top=0pt,bottom=0pt,enhanced, coltext=white, colback=white!40!green}
\newcommand\subsubsubsection{\@startsection{paragraph}{4}{\z@}{-2.5ex\@plus -1ex \@minus -.25ex}{1.25ex \@plus .25ex}{\normalfont\normalsize\bfseries}}
\newcommand\subsubsubsubsection{\@startsection{subparagraph}{5}{\z@}{-2.5ex\@plus -1ex \@minus -.25ex}{1.25ex \@plus .25ex}{\normalfont\normalsize\bfseries}}
\newtheorem{theorem}{Theorem}
\newtheorem{lemma}{Lemma}
\begin{document}

\begin{singlespace}

\title{Phase transition of \emph{descending} phase retrieval algorithms   
}
\author{
\textsc{Mihailo Stojnic
\footnote{e-mail: {\tt flatoyer@gmail.com}} }}
\date{}
\maketitle

\centerline{{\bf Abstract}} \vspace*{0.1in}

We study theoretical limits of \emph{descending} phase retrieval algorithms. Utilizing \emph{Random duality theory}  (RDT) we develop a generic program that allows statistical characterization of various algorithmic performance metrics. Through these we identify the concepts of \emph{parametric manifold} and its \emph{funneling points} as key mathematical objects that govern the underlying algorithms' behavior. An isomorphism between single funneling point manifolds and global convergence of descending algorithms is established. The structure and shape of the parametric manifold as well as its dependence on the sample complexity are studied through both plain and lifted RDT. Emergence of a phase transition is observed. Namely, as sample complexity increases, parametric manifold transitions from a multi to a single funneling point structure. This in return corresponds to a transition from the scenarios where descending algorithms generically fail to the scenarios where they succeed in solving phase retrieval.  We also develop and implement a practical algorithmic variant that in a hybrid alternating fashion combines a barrier and a plain gradient descent. Even though the theoretical results are obtained for infinite dimensional scenarios (and consequently non-jittery parametric manifolds), we observe a  strong agrement between theoretical and simulated phase transitions predictions for fairly small dimensions on the order of a few hundreds..

\vspace*{0.25in} \noindent {\bf Index Terms: Phase retrieval; Descending algorithms; Random duality theory}.

\end{singlespace}

\section{Introduction}
\label{sec:back}

For a given unit norm $\bar{\x}\in\R^n$ we consider the following \emph{phaseless} collection of linear measurements
\begin{eqnarray}
   \y &=& |A\bar{\x}|^2, \label{eq:inteq1}
\end{eqnarray}
where $A\in\mR^{m\times n}$. Recovering $\bar{\x}$ provided that one has access to $\y$ and $A$ assumes solving the following inverse problem
\begin{eqnarray}
\mbox{find} & & \x \nonumber \\
   \mbox{subject to} & & |A\x|^2=\y (= |A\bar{\x}|^2). \label{eq:inteq2}
\end{eqnarray}
The above feasibility problem (basically a system of quadratic equations) is the real version of the well known \emph{phase retrieval} (PR) problem. PR paradigm is the key mathematical feature  of a host of practical data acquisitions and recovery protocols in signal and image processing, particularly in x-ray crystallography, ptychography, holography, modern optics, and so on. It typically appears in scenarios where the nature of the measurement gathering process is such that acquiring signal's phase is infeasible.

Checking all possible options for signs of $\y$ and solving the residual linear problem is a simple procedure that can be used to solve (\ref{eq:inteq2}). Under solving (\ref{eq:inteq2}), we in this paper always assume solving it up to \emph{global} phase ambiguity as $|A\bar{\x}|^2=|A(-\bar{\x})|^2$ implies that both $\bar{\x}$ and $-\bar{\x}$ are admissible solutions. Moreover, we generically consider practically relevant so-called \emph{solvable} scenarios, i.e. scenarios where $A$ is such that besides $\pm \bar{\x}$  there are no other solutions of (\ref{eq:inteq2}). Due to its combinatorial nature, naively checking all sign combinations of $\y$ becomes computationally prohibitive as $n$ and $m$ increase.  Our interest in this paper is precisely in the large dimensional \emph{linear/proportional regime} which assumes
\begin{eqnarray}
 \alpha \triangleq \lim_{n\rightarrow\infty} \frac{m}{n}, \label{eq:inteq3}
\end{eqnarray}
with $\alpha$ -- the so-called \emph{oversampling or sample complexity ratio} -- remaining constant as the dimensions grow. Clearly, exhaustively searching over signs of $\y$ incurs an \emph{exponential} complexity of the order of $O(2^m)$ and as such is unacceptable from the practical point of view (in imaging PR applications $n$ can easily be on the order of tens of thousands). The need for computationally efficient recovery schemes has been recognized from the early PR days back in the first half of the last century. Consequently, a rather vast and fairly systematic algorithmic theory has been developed over last several decades. The mathematical research however particularly intensified over the last 10-15 years in large part due to the appearance of several algorithmic and theoretical breakthroughs  \cite{CandesSV13,CandesESV13,CandesLS15}. We here focus on one of these breakthroughs, the so-called Wirtinger flow or gradient algorithm and analyze its performance on a so-called \emph{quantitative} (high precision \emph{phase transition}) level (to be a bit more precise, we study a more general class of \emph{all descending} algorithms which encompasses Wirtinger flow/gradient descent based ones as special cases).

\subsection{Related prior work}
\label{sec:relwork}

\noindent $\star$ \underline{\emph{Phase retrieval relevance}:} As mentioned above, mathematical problems that we study are directly connected to the practically relevant phase retrieval (PR) problems. These problems date back to the early days of x-ray crystallography \cite{Harrison93,Millane90,Millane06}. As  x-ray crystallography extended to noncrystalline materials, PR became a key integral analytical/matheamtical  part of coherent diffraction, ptychography, optical, astronomical or microscopic imaging (see, e.g., \cite{Thibault08,Hurt89,KST1995,Miao1999ExtendingTM,ShechtmanECCMS15,MISE08,Bunk07,Walther01011963,Fienup87,Fienup78,Rod08,Dierolf10,BS79,Misell73}).  Further connections to digital holography \cite{Duadi11,Gabor48,Gabor65} , quantum physics \cite{Corb06,Hein13,HaahHJWY17}, blind deconvolution/demixing \cite{MWCC19,LLSW19,Jung17,ARJ13}, and many other fields emerged over the previous several decades as well. Our main interest is in mathematical aspects of PR problems and on that front two groups of PR considerations established themselves so far as of predominant interest: \textbf{\emph{(i)}} The first one that typically relates to ``mathematical soundness'' of the problem and encompasses studying algebraic  characterizations of injectivity/uniqueness and stability  properties that stem from the inherent PR phase ambiguity (see, e.g., \cite{Conca15,Balan06,Balan09,Bande14,Vinz15} and references therein); and \textbf{\emph{(ii)}} The second one that relates to actual solving of the underlying optimization PR problem which in turn enables the recovery of the desired ``phaselessly'' observed object. As this paper is more aligned with the second group we below look in more detail through the relevant optimization techniques available in the literature and discuss some of the key milestones.

\noindent $\star$ \underline{\emph{Convexity based methods}:}  Since PR is directly related to compressed sensing (CS) one would naturally expect that convex relaxation optimization techniques utilized for solving various CS problems would be of use in handling PR analogues. This is indeed the case. As the key aspect of the phase retrieval non-convexity stems from the existence of a matrix rank-1 constraint, standard optimization  techniques typically suggest relaxing it to full rank. This is precisely what was done in  \cite{CandesSV13,CandesESV13} where a so-called Phaselift relaxation algorithm is introduced (for a closely related Phasecut relaxation -- basically a direct PR analogue to the famous maxcut -- see \cite{WaldspurgerdM15}). Moreover, \cite{CandesSV13,CandesESV13} showed that Phaselift is successful in (both exactly and stably) recovering original signal from phaseless observations provided that sample complexity satisfies $m=O(n\log(n))$ (see also, \cite{PHand17} for further Phaselift robustness discussion). \cite{CandesL14} then further improved sample complexity to $m=O(n)$ which is optimal up to a constant. Corresponding versions of these results when one instead of Gaussians relies on Fourier measurements were obtained in \cite{GFK17,CandesLS15b}. While the rank dropping semi-definite programming (SDP) relaxations tend to be among the tightest convex ones, they rely on solving the residual SDP which (due to dimensions being lifted from $n$ to $n^2$) can pose a serious numerical challenge. As a numerically more amenable alternative, \cite{GoldsteinS18,BahmaniR16} introduced the so-called PhaseMax and showed that it can recover the original signal provided that the starting point of the algorithm (initializer) is sufficiently close to it (see \cite{HandV16,ChenCandes17} for slight improvements which ultimately design the initializer so that the sample complexity is $m=O(n)$ and \cite{SalehiAH18} for precise PhaseMax analysis). Another convex multi-iterative PhaseMax alternative, PhaseLamp, was introduced in \cite{DhifallahTL17} and shown to allow for better both theoretical and algorithmic properties than PhaseMax. Further structuring of the unknown signal (say sparsity, low-rankness, positivity and so on) can easily be incorporated in convex methods. One basically proceeds along the lines of what is usually done in standard compressed sensing (see, e.g. \cite{Ohlsen12,LiVor13,KeungRTi17}). For example, \cite{LiVor13} shows that sample complexity $m=O(k^2\log(n))$ suffice for successful PR recovery of a $k$-sparse vector which is even in $m\ll n$ (a nonlinear regime that is not of our prevalent interest) surprisingly weaker than the corresponding  $m=O(k\log(n))$  compressed sensing result. Various adaptations/upgrades of standard convex techniques are possible as well when additional structuring is taken into account (for two-stage approaches and randomized Kaczmarz adaptations see \cite{JaganathanOH17,IwenVW17} and \cite{TV19,KWei15}, respectively).

\noindent $\star$ \underline{\emph{Non-convex methods}:} Somewhat surprisingly, within the PR context utilization  of non-convex  methods significantly predates the convex ones. The most typical early PR algorithmic considerations relate to  the so-called alternating minimization \cite{Gerch72,Fienup82,Fienup87}. One of the big breakthroughs arrived, however, when \cite{CandesLS15} uncovered that the simple gradient (the authors called it Wirtinger flow due to direct dealing with complex derivatives instead of their real equivalents) often significantly outperforms all known techniques (for a closely related variant called amplitude flow, see, e.g., \cite{WangGE18} and  for further modifications, e.g.,  \cite{CLM16}). Moreover, not only did the basic gradient turn out to be numerically substantially superior compared to the SDP relaxations, but the accompanying theoretical analysis of  \cite{CandesLS15} also revealed that it behaves in such a way while allowing for $m=O(n\log(n))$ sample complexity (the sparse variants of Wirtinger and
amplitude flow \cite{YuanWW19,WangGE18}, follow the suit of the convex methods and allow for $m=O(k^2\log(n))$; on the other hand, \cite{Soltanolkotabi19} in a way breaks the $k^2$ barrier and shows that $m=O(k\log(n))$ suffices for local Wirtinger flow convergence). This is particularly surprising given that the preliminary considerations \cite{SunQW18,HandLV18} of the underlying objective revealed a very strong deviations from convexity. Another non-convex
classical variant, Alternating direction method of multipliers (ADMM) (basically Douglas-Rachford within the PR context), was introduced in \cite{FannjiangZ20} and shown to perform very well (see, also \cite{Netrapalli0S15,Waldspurger18} for theoretical characterizations of classical alternating minimization projections). Given the promise the use of generative models and deep learning in compressed sensing have shown
\cite{LeiJDD19,JordanD20,BoraJPD17,DaskalakisRZ20a}, in \cite{HandLV18} Deep (sparse) phase retrieval (DPR) concept was put forth. Empirical results fairly similar to those obtained in \cite{BoraJPD17} for deep learning compressed sensing were obtained in \cite{HandLV18}, basically indicating that the deep nets have a strong potential to substantially improve upon any of the known non-machine learning based techniques. Moreover, \cite{HandLV18} proved that a constant expansion $d$-layer net can achieve such a performance while allowing for sample complexity $m=O(kd\log(n))$, which for a constant $d$ allows to break the $k^2\log(n)$ barrier mentioned above and achieve the  $k\log(n)$ as predicated by drawing the analogy with standard compressed sensing. However, deep learning also has some limitations as well. Nonzero errors are hard to achieve (which effectively limits the precision in the so-called realizable or noiseless scenarios), allowed problem dimensions are often restrictive, and a frequent (time-consuming) retraining might be needed depending on the application. Precise theoretical characterizations are much harder to obtain as well and despite a very strong progress recently achieved in that direction (see, \cite{MBBDN23,Stojnicinjdeeprelu24,Stojnicinjrelu24}), utilization of deep nets within the phase retrieval context is still a bit distant from indeed being generically superior as one naturally expects. On the other hand, generically more competitive with the above nonconvex methods are heuristic adaptations of the classical approximate message passing (AMP) algorithm (initially designed for large scale compressed sensing applications in  \cite{DonMalMon09}). While such methods and their analyses may heavily depend on model assumptions, an excellent empirical performance is demonstrated in \cite{SchniterR15}.

Clearly, all other parameters equal, the lowest sample complexity is the most desirable feature of any PR algorithm.  As the above suggests, the best sample complexity among the convex methods is typically achieved by the SDP relaxations, i.e. by Phaselift and its derivatives. On the other hand the fastest are those that relate to linear programming and PhaseMax. A natural question is whether one can design algorithms that achieve both good sample complexity and quick convergence without consuming way too much memory. The above discussed nonconvex methods to a large degree do achieve that. However, to be able to provide more convincing arguments that this is indeed true, one can not be satisfied with standard \emph{qualitative} performance assessments and instead needs to resort to more precise \emph{quantitative} ones.  This is precisely what we do in this paper. Before we proceed with presentation of our results,  we below discuss results obtained in these directions throughout the literature so far.

\noindent $\star$  \underline{\emph{``Qualitative'' vs ``quantitative'' performance characterizations:}}  Most of the results discussed above relate to qualitative performance characterizations. As such they provide a good intuitive hint as to how  algorithms behave either on their own or in a comparison with others. However, they are far away from being precise. Obtaining precise, \emph{quantitative}, characterizations is usually much harder and the known results are expectedly  scarcer. In \cite{TakahashiK22} a replica analysis of vector AMP (VAMP) algorithms \cite{SchniterRF16,RanganSF17} is presented. As in \cite{TakahashiK22}, a Bayesian inference context is considered in \cite{MaillardLKZ20} and a large set of replica predictions shown to be correct (for a real version of the complex counterpart from \cite{MaillardLKZ20} see \cite{BarbierKMMZ18}; Bayesian context is also utilized (albeit differently) in  \cite{StrSag25}, where an excellent two stage phase-selection algorithmic procedure is proposed and analyzed via replica methods). Since Bayesian context in general assumes prior knowledge of the signal's statistics, statistical identicalness over its components, and a perfect knowledge of the so-called channel (or posterior) it can not be fairly compared to other methods mentioned above. On the plus side though, it does contain precise, phase transition types of, results that are unreachable by any of the methods discussed above.

\noindent $\star$  \underline{\emph{Initializers:}}  Given the role initializers play in the non-convex methods, we also point out a few excellent results obtained in those directions. Namely, in \cite{LuL17} (see also \cite{LuoAL19,LuLi20} for further extensions) a precise ``residual overlap vs oversampling ratio'' characterization is provided for the so-called nonnegative diagonal preprocessing. It further revealed existence of a sharp oversampling ratio transition between phases where the overlap (the unit norm normalized inner-product between the estimate and the true signal) is zero or nonzero. This immediately implied direct consequences for phase retrieval algorithms as their ability to reach the global optimum  is typically predicated on the use of nonzero overlap producing initializers. Results of \cite{LuL17} were then extended in \cite{MondelliM19} to negative diagonals (and complex domain) and the so-called optimal initializers were obtained as well. Relying on such initializers, \cite{MondelliM19} then went further and proved the critical sample complexity above which a nonzero overlap appears. In \cite{LuoAL19} the optimality of spectral  preprocessing discussed in \cite{MondelliM19} was proven for any sample complexity. While  \cite{LuL17,MondelliM19} relied on Gaussian measurements, \cite{MaDXMW21} considered orthogonal ones and utilized an Expectation Propagation paradigm to precisely characterize ``residual overlap vs oversampling ratio'' (even more general classes of measurement matrices and models were considered via statistical physics tools in \cite{MaillardKLZ21,AubinLBKZ20}). Results of \cite{MaDXMW21} were also established as mathematically fully rigorous in \cite{DudejaB0M20}.

In what follows, we consider non-convex methods and their algorithmic phase transitions. Before getting into technical details, we first briefly summarize our key results.

\subsection{Our contributions}
\label{sec:contrib}

As stated above, we focus on theoretical  performance characterizations of non-convex optimization methods utilized in algorithmic phase retrieval. Of our main interest are the so-called \emph{descending} algorithms (among others, they encompass all forms of the famous gradient descent). We emphasize again that we consider a statistical \emph{linear/proportional} high-dimensional regime where the length of the unknown vector is $n$ and the number of phaseless measurements (sample complexity) is $m$ with $\alpha=\lim_{n\rightarrow \infty} \frac{m}{n}$ remaining constant as $n$ grows. Also we present the results for the real valued scenario (however, all presented concepts directly extend to the complex case; as the technical details are different and require problem specific adjustments we defer presenting them to a separate companion paper).

\begin{itemize}
\item We design a convenient \emph{fundamental  phase retrieval optimization} (f-pro) problem formulation and recognize the importance of studying behavior of its objective (see part 1) in Section \ref{sec:ubrdt}).
  \item A \emph{Random duality theory}  (RDT) based generic program is then developed for statistical studying of f-pro (see parts 2) and 3) in Section \ref{sec:ubrdt}).
  \item  The RDT program allows to identify the concept of  \emph{parametric manifold} as a key mathematical object impacting descending algorithms abilities to solve f-pro. In particular, two parameters  --  \textbf{\emph{1)}} overlap between the algorithmic solution and the true signal and \textbf{\emph{2)}} the $\ell_2$ norm of the algorithmic solution -- are recognized as the manifold's main ingredients. In particular, their interconnections determine the manifold's shape (see Section \ref{sec:algimp}).
  \item  The concept of manifold's so-called \emph{funneling points} is put forth as well and an isomorphism between
 single funnelling point manifolds and successful solving of f-pro via  \emph{descending} algorithms is established (see Section \ref{sec:algimp}).
   \item The way oversampling ratio impacts manifold's shape is numerically evaluated utilizing the developed RDT program (for visual illustrations see Figures \ref{fig:fig2}-\ref{fig:fig4}). Emergence of a phase transition is observed. In particular, as sample complexity increases, parametric manifold transitions from a multi to a single funneling point structure implying a  descending algorithms transition from generically failing  to generically succeeding in reaching  f-pro's global optimum.
   \item Due to a lack of the so-called strong deterministic duality, a lack of strong random duality is anticipated as well which implies a strictly bounding character of the plain RDT results from Section \ref{sec:ubrdt}. To remedy such an inevitability and further lift plain RDT estimates,  a powerful \emph{Lifted} RDT based program is developed (see Section \ref{sec:liftrdt} for theoretical consideration and Figures \ref{fig:fig5} and \ref{fig:fig6}  for visual illustrations of the ``\emph{lifting effect}'').
\item We implement a hybrid combination of a barrier gradient descent algorithm and a plain gradient descent and demonstrate that, despite relatively small dimensions ($n=300$) and strong anticipated jitteriness effects, its simulated phase transition is fairly close to the theoretical predictions (see Figure \ref{fig:fig7}).
\item The critical role the norm of the unknown vector (one of the two manifold's parameters) plays in both hybrid and plain gradient optimization processes is revealed and discussed in detail as well (see Section \ref{sec:plaingrad}).
    \item Elegant closed form analytical considerations made theoretical analyses of Sections \ref{sec:2lay} and \ref{sec:liftrdt} more convenient to conduct with plain magnitudes in the objective. On the other hand, smoothness of the \emph{squared} magnitudes derivatives made practical running easier. To ensure that practical implementations have their corresponding theoretical counterparts, we in Sections \ref{sec:sqadj} and \ref{sec:sqliftrdt} conducted RDT and lifted RDT analyses for squared magnitudes as well. While the theoretical considerations are way less elegant and the residual evaluations are much more involved (and consequently prone to potential numerical inaccuracies) all main theoretical conclusions from Sections \ref{sec:2lay} and \ref{sec:liftrdt} translate to the corresponding squared magnitudes related ones (see Figures \ref{fig:fig8}-\ref{fig:fig12}).
\end{itemize}

\section{\emph{Real} phase retrieval}
 \label{sec:2lay}

Relying on the observations from the previous sections, we proceed in a fashion similar to \cite{Stojnicinjrelu24,StojnicGardGen13,StojnicICASSP10var,StojnicCSetam09} and connect studying phase retrieval algorithmic and theoretical properties to the analysis of random optimization problems (rops). In particular, we observe that the following rop is of key interest
 \begin{eqnarray}
 {\mathcal R}(A): \qquad \qquad    \min_{\x,\z} & & \||A\bar{\x}|-|\z| \|_2^2\nonumber \\
  \mbox{subject to} & &  A\x=\z. \label{eq:ex1a4}
\end{eqnarray}
We find it convenient to further set
 \begin{eqnarray}
\hspace{-.4in}\bl{\textbf{\emph{Fundamental phase-retrieval optimization (f-pro):}}} \qquad\qquad  \xi(c,x) \triangleq \min_{\x,\z} & & \||A\bar{\x}|-|\z| \|_2^2\nonumber \\
  \mbox{subject to} & &  A\x=\z \nonumber \\
  & & \x^T\bar{\x}=x \nonumber \\
  & & \|\x\|_2^2=c. \label{eq:ex1a4a0}
\end{eqnarray}
One should note that (\ref{eq:ex1a4}) and (\ref{eq:ex1a4a0}) slightly differ from the classical phase retrieval formulations typically seen in the literature. Namely, the common practice is to consider \emph{squared magnitudes}, $|A\x|^2$ and $|\z|^2$, in the objective. Throughout the literature such scenarios effectively assume that one works with \emph{intensity} measurements. On the other hand, taking just (non-squared) magnitudes typically corresponds to work with \emph{amplitude} measurements. There is really not that much conceptual difference between the two options. Throughout the paper we consider the \emph{non-squared} option as it allows for more elegant and numerically efficient treatment of our analytical considerations. In Sections \ref{sec:sqadj} and \ref{sec:sqliftrdt} the \emph{non-squared} results are complemented with the corresponding \emph{squared} ones.

Let $\hat{\x}$ be a solution of (\ref{eq:ex1a4a0}). One first trivially observes that for $c=x=1$,  $\xi(1,1)=0$ and $\hat{\x}=\bar{\x}$, i.e, the solution of the above optimization matches the desired $\bar{\x}$ (to avoid trivial phase ambiguity, throughout the presentation we consider only $x>0$; any of the key results that we obtain for $x>0$ holds symmetrically for $x<0$ as well). Two problems are then of interest: \textbf{\emph{(i)}} Is $\xi(1,x)>0$ for $x\neq 1$? and \textbf{\emph{(ii)}} Can a computationally efficient algorithm that solves (\ref{eq:ex1a4a0}) be designed? It is not that difficult to see that a positive answer to the first question ensures the so-called uniqueness and, consequently, \emph{theoretical} solvability of (\ref{eq:ex1a4a0}). On the other hand, a positive answer to the second question ensures a \emph{practical} solvability of (\ref{eq:ex1a4a0}). As the behavior of $\xi(c,x)$ critically impacts answering both of these questions, we study it in the following section in more detail. In particular, we consider a standard normal $A$ and develop a generic \emph{Random duality theory} (RDT) based analytical program that allows to characterize f-pro.

\subsection{Analysis of $\xi(c,x)$ via Random Duality Theory (RDT)}
\label{sec:ubrdt}

Before proceeding with a detailed analysis we briefly summarize the main RDT principles developed in a long series of work \cite{StojnicCSetam09,StojnicICASSP10var,StojnicISIT2010binary,StojnicICASSP10block,StojnicRegRndDlt10,StojnicGenLasso10}. We then continue by discussing how each of them can be implemented within the context of our interest here.

\vspace{-.0in}\begin{center}
 \begin{tcolorbox}[title={\small Summary of the RDT's main principles} \cite{StojnicCSetam09,StojnicRegRndDlt10}]
\vspace{-.15in}
{\small \begin{eqnarray*}
 \begin{array}{ll}
\hspace{-.19in} \mbox{1) \emph{Finding underlying optimization algebraic representation}}
 & \hspace{-.0in} \mbox{2) \emph{Determining the random dual}} \\
\hspace{-.19in} \mbox{3) \emph{Handling the random dual}} &
 \hspace{-.0in} \mbox{4) \emph{Double-checking strong random duality.}}
 \end{array}
  \end{eqnarray*}}
\vspace{-.2in}
 \end{tcolorbox}
\end{center}\vspace{-.0in}

\noindent To ensure neatness of the presentation, all key results (simple as well as more complicated ones) are formulated as lemmas or theorems.

\vspace{.1in}

\noindent \underline{1) \textbf{\emph{Algebraic phase retrieval characterization:}}}  We start by observing that  rotational invariance of $A$ allows to
 rotate (without loss of generality) $\bar{\x}$ so that it becomes $\bar{\x}=[\|\bar{\x}\|_2,0,\dots,0 ]^T$. One can then
rewrite (\ref{eq:ex1a4a0}) as
 \begin{eqnarray}
  \xi(c,x) = \min_{\x,\z} & & \||A_{:,1}|\|\bar{\x}\|_2-|\z| \|_2^2\nonumber \\
  \mbox{subject to} & &  A\x=\z \nonumber \\
  & & \x^T\bar{\x}=\x_1\|\bar{\x}\|_2= x \nonumber \\
  & & \|\x\|_2^2=c, \label{eq:rdteq0a0}
\end{eqnarray}
where $A_{:,i}$ stands for the $i$-th column of $A$. We al;so find it convenient to set $r\triangleq \sqrt{c-x^2}$. It is then easy to see that restriction to $\|\bar{\x}\|_2=1$ can also be done without loss of generality. One can then rewrite (\ref{eq:rdteq0a0}) as
 \begin{eqnarray}
  \xi(c,x) = \min_{\x,\z} & & \||A_{:,1}|-|\z| \|_2^2\nonumber \\
  \mbox{subject to} & &  A\x=A_{:,1}x + A_{:,2:n}\x_{2:n} = \z \nonumber \\
   & & \sum_{i=2}^{n}\x_i=c-x^2=r^2, \label{eq:rdteq0a1}
\end{eqnarray}
Writing Lagrangian also gives
 \begin{eqnarray}
  \xi(c,x) = \min_{\x,\z} \max_{\y}   & & \||A_{:,1}|-|\z| \|_2^2
  +\y^TA_{:,1}x + \y^TA_{:,2:n}\x_{2:n} -\y^T \z \nonumber \\
  \mbox{subject to}
     & & \sum_{i=2}^{n}\x_i=c-x^2=r^2. \label{eq:rdteq0a2}
\end{eqnarray}
Setting $\g^{(0)}=A_{:,1}$, one obtains  a more compact form of (\ref{eq:rdteq0a2})
 \begin{eqnarray}
  \xi(c,x) = \min_{\|\x_{2:n}\|_2=r,\z} \max_{\y}  \lp \||\g^{(0)}|-|\z| \|_2^2
  +\y^T \g^{(0)}x + \y^TA_{:,2:n}\x_{2:n} -\y^T \z \rp. \label{eq:rdteq0a3}
\end{eqnarray}
The above  is a rather useful characterization of the fundamental phase retrieval optimization (f-pro). Together with its implications regarding theoretical solvability of the phase retrieval, it is summarized in the following lemma.

\begin{lemma} Consider real phase retrieval (PR) problem with $n$ unknowns and sample complexity $m$. Let $A\in\mR^{m\times n}$, $\g^{(0)}\triangleq A_{:,1}$, and assume a high-dimensional linear (proportional) regime,
with the over-sampling ratio $\alpha=\lim_{n\rightarrow\infty}\frac{m}{n}$. Then, the PR is theoretically solvable (i.e., it has a unique (up to a global phase) solution) provided that
 \begin{equation}\label{eq:ta10}
 \forall x\neq 1 \qquad f_{rp}(1,x;A)>0,
\end{equation}
where
\begin{equation}\label{eq:ta11}
f_{rp}(c,x;A)\triangleq \frac{1}{n}  \min_{\|\x_{2:n}\|_2=r,\z} \max_{\y}  \lp \||\g^{(0)}|-|\z| \|_2^2
  +\y^T \g^{(0)}x + \y^TA_{:,2:n}\x_{2:n} -\y^T \z \rp.
\end{equation}
    \label{lemma:lemma1}
\end{lemma}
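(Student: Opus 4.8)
\noindent\emph{Proof plan.} I would split the argument into two stages: (a) justifying the algebraic identity (\ref{eq:ta11}) for the normalized f-pro objective, and (b) deducing theoretical solvability from the positivity hypothesis (\ref{eq:ta10}).

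Stage (a) is essentially the bookkeeping already sketched in (\ref{eq:rdteq0a0})--(\ref{eq:rdteq0a3}); the plan is just to spell out why each reduction is legitimate. First, since the Gaussian law of $A$ is invariant under right multiplication by an orthogonal matrix, and since the objective and the constraint $A\x=\z$ are covariant under a simultaneous orthogonal change of coordinates on $\x$ and $\bar{\x}$, one may rotate coordinates so that $\bar{\x}=[\|\bar{\x}\|_2,0,\dots,0]^T$; this turns $|A\bar{\x}|$ into $|A_{:,1}|\,\|\bar{\x}\|_2$ and produces (\ref{eq:rdteq0a0}). Recalling that $\bar{\x}$ is unit norm by assumption (equivalently, rescaling $(\x,\z,\bar{\x})$ to make it so, under which $\xi$ merely rescales by $\|\bar{\x}\|_2^{-2}$), with $\bar{\x}=\e_1$ the overlap constraint becomes $\x_1=x$, the block split $A\x=A_{:,1}x+A_{:,2:n}\x_{2:n}$ is immediate, and $\|\x\|_2^2=c$ together with $\x_1=x$ forces $\|\x_{2:n}\|_2^2=c-x^2=r^2$, i.e.\ (\ref{eq:rdteq0a1}). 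Introducing a Lagrange multiplier $\y$ for the now-affine equality $A\x=\z$ eliminates the $\z$-constraint and gives (\ref{eq:rdteq0a2}); writing $\g^{(0)}\triangleq A_{:,1}$ yields the compact form (\ref{eq:rdteq0a3}), and dividing by $n$ is exactly (\ref{eq:ta11}). The one detail I would be careful to record is that the rotation acts on the columns of $A$, so that afterwards $A_{:,1}$ (hence $\g^{(0)}$) is still a standard Gaussian vector independent of $A_{:,2:n}$ -- this independence is what the RDT analysis in the next subsections relies on.

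For stage (b): the f-pro objective is a sum of squares, so $\xi(c,x)\ge 0$ always, and $\xi(1,1)=0$ is attained at $\hat{\x}=\bar{\x}$ (with $\z=A\bar{\x}$), giving $f_{rp}(1,1;A)=0$. Conversely, let $\x^\star$ be a unit-norm solution of the feasibility problem (\ref{eq:inteq2}), so $|A\x^\star|^2=|A\bar{\x}|^2$; since magnitudes are nonnegative this is the same as $|A\x^\star|=|A\bar{\x}|$ entrywise, hence $\||A\bar{\x}|-|A\x^\star|\|_2^2=0$. Taking $\z=A\x^\star$ and $x^\star\triangleq(\x^\star)^T\bar{\x}$, the pair $(\x^\star,\z)$ is feasible for the program (\ref{eq:ex1a4a0}) defining $\xi(1,x^\star)$ and attains objective $0$, so $f_{rp}(1,x^\star;A)=\frac{1}{n}\xi(1,x^\star)=0$. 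Under (\ref{eq:ta10}) this forces $x^\star=1$, and then $(\x^\star)^T\bar{\x}=1$ with $\|\x^\star\|_2=\|\bar{\x}\|_2=1$ gives $\x^\star=\bar{\x}$ by the equality case of Cauchy--Schwarz; the symmetry $|A(-\bar{\x})|^2=|A\bar{\x}|^2$ supplies the companion $-\bar{\x}$, so the solution set is exactly $\{\pm\bar{\x}\}$, which is the claimed uniqueness up to a global phase. I would state plainly that, in line with the normalization of $\bar{\x}$ and the ``solvable'' framing in Section~\ref{sec:back}, the compared candidates $\x^\star$ are taken unit norm; excluding norm-mismatched spurious solutions is a separate genericity-of-$A$ question that this lemma does not touch.

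I do not expect stages (a) or (b) themselves to be the obstacle -- they are elementary once the orthogonal-invariance bookkeeping is written out. The real work lies one level down and outside this lemma: hypothesis (\ref{eq:ta10}) is a statement about the \emph{random} quantity $f_{rp}(1,x;A)$, and actually certifying its positivity -- more precisely, pinning down the (bounding) behaviour of $f_{rp}$ for standard normal $A$ in the proportional regime $\alpha=\lim m/n$ and tracking its dependence on $\alpha$ -- is precisely what the plain and lifted RDT programs of the following sections are built to do. Accordingly I would present Lemma~\ref{lemma:lemma1} as the reduction step that sets up those analyses, with the two stages above made explicit and its force flagged as conditional on the later estimates.
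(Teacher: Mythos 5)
Your proposal is correct and follows essentially the same route as the paper: stage (a) is exactly the reduction already displayed in (\ref{eq:rdteq0a0})--(\ref{eq:rdteq0a3}) (rotation by orthogonal invariance, normalization, block split, Lagrangian), and the paper's proof is simply the remark that (\ref{eq:ta11}) is (\ref{eq:rdteq0a3}) scaled by $1/n$, with your stage (b) spelling out the uniqueness deduction that the paper leaves to the preceding discussion of $\xi(1,x)$. Your extra care in flagging the restriction to unit-norm candidates and the $\pm\bar{\x}$ sign convention only makes explicit what the paper treats implicitly, so no substantive difference in approach.
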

\begin{proof}
Follows immediately from the above discussion and after recognizing that (\ref{eq:ta11}) is (\ref{eq:rdteq0a3})  cosmetically  scaled by $1/n$.
\end{proof}

The optimization problem on the right hand side of (\ref{eq:ta11}) is the so-called \emph{random primal}. The corresponding \emph{random dual} is determined next.

\vspace{.1in}
\noindent \underline{2) \textbf{\emph{Determining the random dual:}}} As is typical within the RDT, the measure concentration is utilized as well. This practically means that for any fixed $\epsilon >0$,  one has (see, e.g. \cite{StojnicCSetam09,StojnicRegRndDlt10,StojnicICASSP10var})
\begin{equation*}
\lim_{n\rightarrow\infty}\mP_{A}\left (\frac{|f_{rp}(c,x;A)-\mE_{A}(f_{rp}(c,x;A)|}{\mE_{A}(f_{rp}(c,x;A)}>\epsilon\right )\longrightarrow 0.\label{eq:ta15}
\end{equation*}
Another key ingredient of the RDT machinery is the following, so-called random dual, theorem.
\begin{theorem} Assume the setup of Lemma \ref{lemma:lemma1}. Let the elements of $A\in\mR^{m\times n}$ ($\g^{(0)}\in\mR^{m\times 1}$ and $A_{:,2:n}\in\mR^{m\times (n-1)}$), $\g^{(1)}\in\mR^{m\times 1}$, and  $\h^{(1)}\in\mR^{(n-1)\times 1}$  be iid standard normals. Consider two positive scalars $c$ and $x$  ($0\leq x \leq c$) and set $r\triangleq \sqrt{c-x^2}$. Let
\vspace{-.0in}
\begin{eqnarray}
\cG & \triangleq & \lp A,\g^{(1)},\h^{(1)}\rp = \lp\g^{(0)},A_{:,2:n},\g^{(1)},\h^{(1)}\rp  \nonumber \\
\phi(\x,\z,\y) & \triangleq &
 \lp \||\g^{(0)}|-|\z| \|_2^2
  +\y^T \g^{(0)}x   +  \y^T \g^{(1)}\|\x_{2:n}\|_2 + \lp \x_{2:n}  \rp^T\h^{(1)}\|\y\|_2  -\y^T  \z  \rp
\nonumber \\
 f_{rd}(c,x;\cG) & \triangleq &
\frac{1}{n}  \min_{\|\x_{2:n}\|_2=r,\z} \max_{\|\y\|_2=r_y,r_y>0}  \phi(\x,\z,\y)
  \nonumber \\
 \phi_0 & \triangleq & \lim_{n\rightarrow\infty} \mE_{\cG} f_{rd}(c,x;\cG).\label{eq:ta16}
\vspace{-.0in}\end{eqnarray}
One then has \vspace{-.02in}
\begin{eqnarray}
  \lim_{n\rightarrow\infty}\mP_{ A } (f_{rp} (c,x; A )   >  \phi_0)\longrightarrow 1,\label{eq:ta17a0}
\end{eqnarray}
and
\begin{eqnarray}
\hspace{-.3in}(\phi_0  > 0)   &  \Longrightarrow  & \lp \lim_{n\rightarrow\infty}\mP_{\cG}\lp \frac{\xi(c,x)}{n} = f_{rd}(c,x;\cG) >0 \rp \longrightarrow 1\rp
\quad  \Longrightarrow \quad \lp \lim_{n\rightarrow\infty}\mP_{ A } (f_{rp} (c,x; A )   >0)\longrightarrow 1 \rp  \nonumber \\
& \Longrightarrow & \lp \lim_{n\rightarrow\infty}\mP_{A} \lp \mbox{PR is (uniquely) solvable} \rp \longrightarrow 1\rp.\label{eq:ta17}
\end{eqnarray}
 \label{thm:thm1}
\end{theorem}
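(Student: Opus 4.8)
The plan is to follow the standard Random Duality Theory route: use a Gordon-type Gaussian min--max comparison inequality to lower bound the \emph{random primal} $f_{rp}(c,x;A)$ by the \emph{random dual} $f_{rd}(c,x;\cG)$, and then invoke Gaussian (Lipschitz) concentration to replace $f_{rd}$ by its deterministic limit $\phi_0$. Throughout one conditions on $\g^{(0)}=A_{:,1}$, treating it as part of the data, so that the only Gaussian matrix to which Gordon's inequality is applied is $A_{:,2:n}$, which enters (\ref{eq:ta11}) solely through the bilinear form $\y^TA_{:,2:n}\x_{2:n}$.

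First I would fix a radius $r_y>0$ and observe that restricting the inner $\max_{\y}$ in (\ref{eq:ta11}) to $\|\y\|_2=r_y$ can only decrease the objective, so
$$
f_{rp}(c,x;A)\ \ge\ P(r_y):=\frac1n\min_{\|\x_{2:n}\|_2=r,\,\z}\max_{\|\y\|_2=r_y}\Big(\||\g^{(0)}|-|\z| \|_2^2+\y^T\g^{(0)}x+\y^TA_{:,2:n}\x_{2:n}-\y^T\z\Big).
$$
Coercivity of $\z\mapsto\||\g^{(0)}|-|\z| \|_2^2$ (which dominates the linear-in-$\z$ terms) confines the minimizing $\z$ to a high-probability compact ball, so $P(r_y)$ is a min--max of a continuous function over compact sets and Gordon's inequality applies: for every threshold $t$,
$$
\mP_A\big(P(r_y)<t\big)\ \le\ 2\,\mP_{\cG}\big(D(r_y)\le t\big),\qquad D(r_y):=\frac1n\min_{\|\x_{2:n}\|_2=r,\,\z}\max_{\|\y\|_2=r_y}\phi(\x,\z,\y),
$$
with $\phi$ as in (\ref{eq:ta16}). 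Hence, with probability tending to one, $f_{rp}$ is lower bounded by the fixed-radius dual $D(r_y)$.

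Both $f_{rp}(c,x;A)$ and each $D(r_y)$ are Lipschitz in the underlying Gaussians on the relevant compact sets, so measure concentration gives $f_{rp}-\mE_A f_{rp}\to0$ and $D(r_y)-\mE_{\cG} D(r_y)\to0$ in probability, and $\psi_0(r_y):=\lim_{n\to\infty}\mE_{\cG} D(r_y)$ exists. A routine uniform-in-$r_y$ concentration argument, together with a minimax/convex--concave argument on the deterministic limiting problem (which legitimizes interchanging the $\min$ over $(\x,\z)$ with the $\sup$ over $r_y$ built into (\ref{eq:ta16})), identifies the constant of (\ref{eq:ta16}) as $\phi_0=\sup_{r_y>0}\psi_0(r_y)$. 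Choosing, for $\epsilon>0$, a deterministic $r_y^{\epsilon}$ with $\psi_0(r_y^{\epsilon})>\phi_0-\epsilon$ and combining with the concentration of $D(r_y^{\epsilon})$ yields $\mP_A\big(f_{rp}(c,x;A)>\phi_0-2\epsilon\big)\to1$; letting $\epsilon\downarrow0$ --- the passage to the strict bound $>\phi_0$ being legitimate precisely because, strong random duality being absent, one has $\lim_n\mE_A f_{rp}>\phi_0$ while $f_{rp}$ concentrates about its mean --- gives (\ref{eq:ta17a0}). The chain in (\ref{eq:ta17}) is then essentially bookkeeping: if $\phi_0>0$ then (\ref{eq:ta17a0}) gives $f_{rp}(c,x;A)>\phi_0>0$ with probability $\to1$, and since $n\,f_{rp}(c,x;A)=\xi(c,x)$ by the derivation behind Lemma~\ref{lemma:lemma1} while $f_{rd}$ likewise concentrates about $\phi_0>0$, the remaining probability statements of (\ref{eq:ta17}) follow, in particular $\mP_A(f_{rp}(c,x;A)>0)\to1$; applying the latter at $c=1$ for every $x\ne1$ and invoking criterion (\ref{eq:ta10}) of Lemma~\ref{lemma:lemma1} yields unique (up to global phase) solvability of the PR.

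I expect the main obstacle to be the handling of the inner maximization over $\y$: the reduction to a fixed deterministic sphere $\|\y\|_2=r_y$, the compactness/coercivity needed both for Gordon's inequality and for Lipschitzness of the relevant functionals, and --- most delicately --- establishing the interchange $\phi_0=\sup_{r_y>0}\psi_0(r_y)$ with enough uniformity, which is where a convex--concave structure of the limiting problem has to be exploited. A secondary technical point is that Lemma~\ref{lemma:lemma1} requires $f_{rp}(1,x;A)>0$ for \emph{all} $x\ne1$ simultaneously, whereas Theorem~\ref{thm:thm1} is stated for a single pair $(c,x)$; this gap is closed by a standard $\epsilon$-net plus Lipschitz-in-$x$ argument over a compact range of $x$, together with a crude coercivity estimate showing $\xi(1,x)>0$ automatically once $|x|$ is large. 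The Gordon comparison and the concentration steps themselves are routine.
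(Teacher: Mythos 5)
Your proposal takes essentially the same route as the paper's (very terse) proof: condition on $\g^{(0)}$, apply Gordon's Gaussian min--max comparison theorem to the bilinear term in $A_{:,2:n}$ to pass from the random primal to the random dual, and then invoke concentration to replace $f_{rd}$ by $\phi_0$, with the implication chain being bookkeeping via Lemma~\ref{lemma:lemma1}. One small caution: your justification of the strict bound $>\phi_0$ via the claim that absence of strong random duality implies $\lim_n\mE_A f_{rp}>\phi_0$ is not actually established (lack of a proof of equality is not a proof of strict inequality), but the paper itself offers no more detail on this point and the rest of your argument matches its approach.
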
\vspace{-.17in}
\begin{proof}
Follows immediately after conditioning on $\g^{(0)}$ and applying the Gordon's comparison theorem (see, e.g., Theorem B in \cite{Gordon88}). Gordon's theorem is a special case of a series of results Stojnic obtained in \cite{Stojnicgscomp16,Stojnicgscompyx16} (see Theorem 1, Corollary 1, and Section 2.7.2 in \cite{Stojnicgscomp16} as well as Theorem 1, Corollary 1, and Section 2.3.2 in \cite{Stojnicgscompyx16}).
\end{proof}

\vspace{.1in}
\noindent \underline{3) \textbf{\emph{Handling the random dual:}}} We follow the methodologies invented in \cite{StojnicCSetam09,StojnicICASSP10var,StojnicISIT2010binary,StojnicICASSP10block,StojnicRegRndDlt10}. To that end, we first solve the optimizations over $\x$ and $\y$ and obtain from (\ref{eq:ta16})
\begin{equation}
 f_{rd}(c,x;\cG) =
\frac{1}{n}  \min_{\z} \max_{r_y>0}
 \lp \||\g^{(0)}|-|\z| \|_2^2
  +\|\g^{(0)}x   +  \g^{(1)}r -\z\|_2 r_y - \|\h^{(1)} \|_2 r r_y \rp.
\label{eq:hrd1}
 \end{equation}
We then also have
\begin{eqnarray}
 f_{rd}(c,x;\cG)
  &  =  &
\frac{1}{n}  \min_{\z} \max_{r_y>0}
 \lp \||\g^{(0)}|-|\z| \|_2^2
  +\|\g^{(0)}x   +  \g^{(1)}r -\z\|_2^2 r_y - \|\h^{(1)} \|_2^2 r^2 r_y \rp \nonumber \\
  &  \geq   &
\frac{1}{n}  \max_{r_y>0} \min_{\z}
 \lp \||\g^{(0)}|-|\z| \|_2^2
  +\|\g^{(0)}x   +  \g^{(1)}r -\z\|_2^2 r_y - \|\h^{(1)} \|_2^2 r^2 r_y \rp
  \nonumber \\
  &  =   &
\frac{1}{n}  \max_{r_y>0} \min_{\z_i}
 \lp
\sum_{i=1}^{m}
\lp \||\g_i^{(0)}|-|\z_i| \|_2^2
  +\|\g_i^{(0)}x   +  \g_i^{(1)}r -\z_i\|_2^2 r_y \rp
  - \|\h^{(1)} \|_2^2 r^2 r_y \rp
    \nonumber \\
  &  =   &
\frac{1}{n}  \max_{r_y>0} \min_{\z_i}
 \lp
\sum_{i=1}^{m}
\lp \||\g_i^{(0)}|-|\z_i| \|_2^2
  +\||\g_i^{(0)}x   +  \g_i^{(1)}r| -|\z_i|\|_2^2 r_y \rp
  - \|\h^{(1)} \|_2^2 r^2 r_y \rp.
\label{eq:hrd2}
 \end{eqnarray}
Statistical identicalness over $i$ and concentrations then give
\begin{eqnarray}
 \phi_0 & \triangleq & \lim_{n\rightarrow\infty} \mE_{\cG} f_{rd}(\cG)
\geq
  \max_{r_y>0} \mE_{\cG}  \min_{\z_i} \cL_1(r_y),
\label{eq:hrd6}
 \end{eqnarray}
where
\begin{eqnarray}
\cL_1(r_y)
  &  = &
\alpha  \lp \||\g_i^{(0)}|-|\z_i| \|_2^2
  +\||\g_i^{(0)}x   +  \g_i^{(1)}r| -|\z_i|\|_2^2 r_y \rp
  -  r^2 r_y.
\label{eq:hrd7}
 \end{eqnarray}
To optimize over $\z_i$ one first finds the following derivative
\begin{eqnarray}
\frac{d\cL_1(r_y)}{d|\z_i|}
  &  = &
\alpha \lp -2(|\g_i^{(0)}|-|\z_i| )
  -2(|\g_i^{(0)}x   +  \g_i^{(1)}r| -|\z_i|) r_y\rp.
\label{eq:hrd7a0}
 \end{eqnarray}
 Equalling the above derivative to zero allows to determine the optimal $|\z_i|$
\begin{eqnarray}
|\hat{\z}_i|= \frac{1}{1+r_y} \lp |\g_i^{(0)}| + |\g_i^{(0)}x   +  \g_i^{(1)}r| r_y \rp.
\label{eq:hrd7a1}
 \end{eqnarray}
Combining  (\ref{eq:hrd7}) and  (\ref{eq:hrd7a1}) one further obtains
\begin{eqnarray}
\min_{\z_i} \cL_1(r_y)
  &  = &
\alpha  \lp \||\g_i^{(0)}|-|\hat{\z}_i| \|_2^2
  +\||\g_i^{(0)}x   +  \g_i^{(1)}r| -|\hat{\z}_i|\|_2^2 r_y \rp
  -  r^2 r_y \nonumber \\
  &  = &
\alpha
\lp
\frac{r_y^2}{(1+r_y)^2} \lp |\g_i^{(0)}| - |\g_i^{(0)}x   +  \g_i^{(1)}r|  \rp^2
  +\frac{1}{(1+r_y)^2} \lp |\g_i^{(0)}| - |\g_i^{(0)}x   +  \g_i^{(1)}r|  \rp^2
 r_y \rp
  -  r^2 r_y
  \nonumber \\
    &  = &
\alpha
\frac{r_y}{1+r_y} \lp |\g_i^{(0)}| - |\g_i^{(0)}x   +  \g_i^{(1)}r|  \rp^2
  -  r^2 r_y.
\label{eq:hrd7a2}
 \end{eqnarray}
 After setting
\begin{eqnarray}
f_q & \triangleq & \mE_{\cG}\lp |\g_i^{(0)}| - |\g_i^{(0)}x   +  \g_i^{(1)}r|  \rp^2,
\label{eq:hrd7a2a0}
 \end{eqnarray}
 one easily writes
\begin{eqnarray}
 \mE_{\cG} \min_{\z_i} \cL_1(r_y)
     &  = &
\alpha
\frac{r_y}{1+r_y}  \mE_{\cG}\lp |\g_i^{(0)}| - |\g_i^{(0)}x   +  \g_i^{(1)}r|  \rp^2
  -  r^2 r_y \nonumber \\
      &  = &
\alpha
\frac{r_y}{1+r_y}  f_q  -  r^2 r_y.
\label{eq:hrd7a3}
 \end{eqnarray}
Plugging this back in (\ref{eq:hrd6}) gives
\begin{eqnarray}
 \phi_0 & \triangleq & \lim_{n\rightarrow\infty} \mE_{\cG} f_{rd}(\cG)
 \nonumber \\
& \geq &
  \max_{r_y>0}
  \lp
  \alpha
\frac{r_y}{1+r_y} \mE_{\cG}\lp |\g_i^{(0)}| - |\g_i^{(0)}x   +  \g_i^{(1)}r|  \rp^2
  -  r^2 r_y \rp
  \nonumber \\
  & = &
  \max_{r_y>0}
  \lp
  \alpha
\frac{r_y}{1+r_y} f_q
  -  r^2 r_y \rp.
\label{eq:hrd7a4}
 \end{eqnarray}
It is rather obvious but we point out that inequality signs in (\ref{eq:hrd2}), (\ref{eq:hrd6}), and (\ref{eq:hrd7a4}) can be replaced with equalities. To determine optimal $r_y$ one takes the derivative of the expression under $\max$
\begin{eqnarray}
 \frac{d  \lp
  \alpha
\frac{r_y}{1+r_y} f_q
  -  r^2 r_y \rp}{dr_y} = \alpha\frac{1}{\lp1+r_y\rp^2}f_q -r^2.
\label{eq:hrd7a5}
 \end{eqnarray}
Equalling the above derivative to zero gives for the optimal $r_y$
\begin{eqnarray}
\hat{r}_y
  = \max\lp \frac{\sqrt{ \alpha f_q}}{r} -1,0\rp.
\label{eq:hrd7a6}
 \end{eqnarray}
Combining (\ref{eq:hrd7a4}) and (\ref{eq:hrd7a6}) one finally finds
\begin{eqnarray}
 \phi_0 & \triangleq & \lim_{n\rightarrow\infty} \mE_{\cG} f_{rd}(\cG)
  \geq
\max\lp \sqrt{\alpha f_q}
  -  r,0\rp^2.
\label{eq:hrd7a7}
 \end{eqnarray}
After setting
\begin{eqnarray}
A & = &  g_1^2\sqrt{c-r^2} \nonumber \\
B & = & g_1r \nonumber \\
C & = & -g_1\frac{\sqrt{c-r^2}}{r} \nonumber \\
I_1 & = & \frac{A}{2} \erfc\lp \frac{C}{\sqrt{2}}\rp   + B \frac{e^{-\frac{C^2}{2}}}{\sqrt{2\pi}} \nonumber \\
I_2 & = & -A \lp  1-\frac{1}{2}\erfc \lp  \frac{C}{\sqrt{2}}  \rp  \rp + B \frac{e^{-\frac{C^2}{2}}}{\sqrt{2\pi}} \nonumber \\
I & = & I_1+I_2,
\label{eq:hrd7a8}
\end{eqnarray}
 and solving the remaining integrals one also obtains
\begin{eqnarray}
f_{g,1} & = &  (1+c) - 2I \nonumber \\
f_q & \triangleq & \mE_{\cG}\lp |\g_i^{(0)}| - |\g_i^{(0)}x   +  \g_i^{(1)}r|  \rp^2
 =  \frac{2}{\sqrt{2\pi}} \int_{0}^{\infty}f_{q,1}e^{-\frac{g_1^2}{2}}dg_1.
 \label{eq:hrd7a9}
\end{eqnarray}

  \vspace{.1in}
\noindent \underline{4) \textbf{\emph{Double checking the strong random duality:}}}  Double checking for the strong random duality is the last step of the RDT machinery. However, since a deterministic strong duality is not present the reversal considerations from \cite{StojnicRegRndDlt10}  and the strong random duality are not in place. This basically implies that the above mechanism produces strict lower bounds on the f-pro's scaled objective, $\xi(c,x)$ .

\subsection{Algorithmic implications}
\label{sec:algimp}

The above results have very strong implications for algorithmic solving of the phase retrieval. In Figure \ref{fig:fig1} we show how $\phi_0$ changes as a function of $x$ for several different values of $c$. As can be seen from the figure, as $c$ increases the curves are lower and at the same time flatter in the low $x$ (overlap) regime. The value $\alpha^{(1)}\triangleq 1.7932$ is chosen so that for $c=1$ the curve is as flat as it can be without having a local maximum. This would indicate that as long as the sample complexity ratio $\alpha$ is larger than $\alpha^{(1)}$, (\ref{eq:ex1a4}) can be solved by \emph{any} descending algorithm (say, any form of the gradient descent) without having such an algorithm being trapped in a local optimum.
\begin{figure}[h]
\centering
\centerline{\includegraphics[width=1\linewidth]{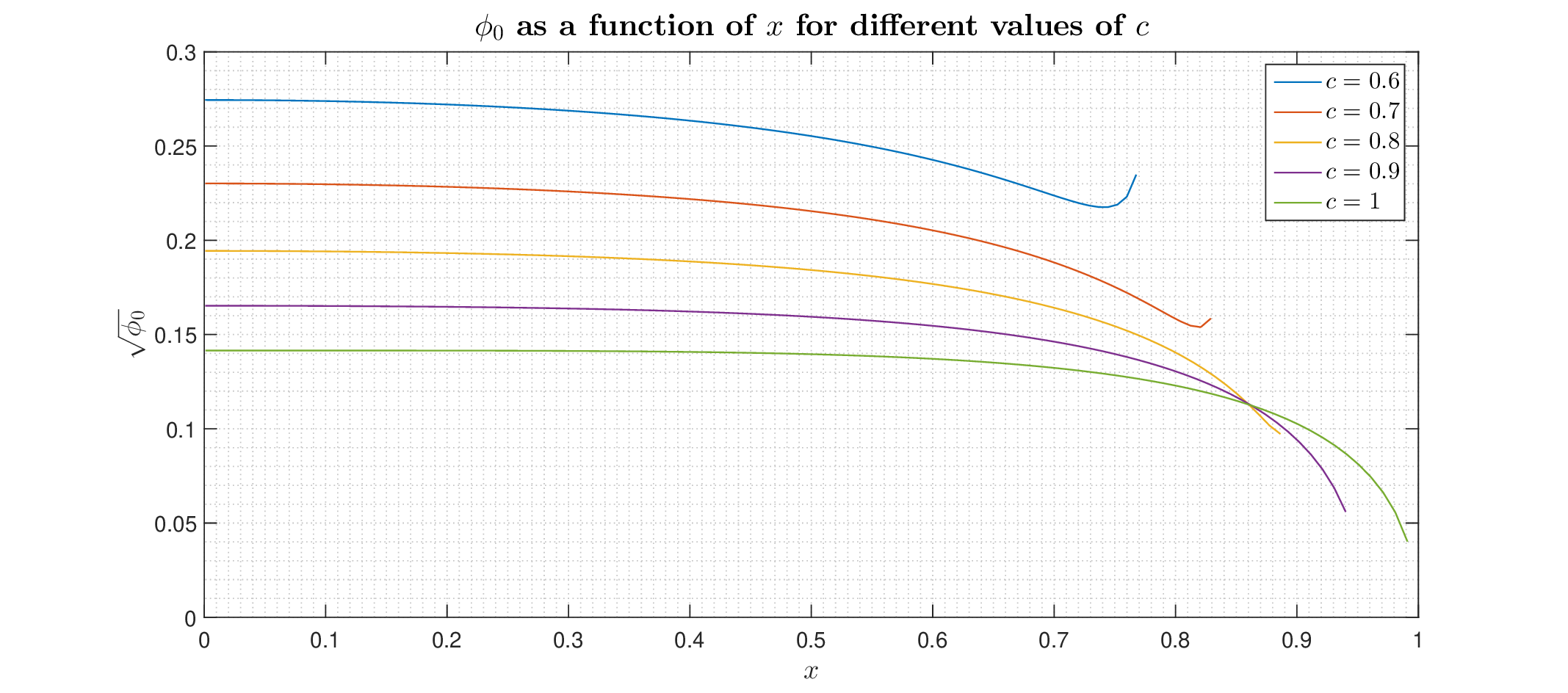}}
\caption{$\phi_0$ as a function of $x$ for different values of $c$; $\alpha=1.7932$}
\label{fig:fig1}
\end{figure}

The importance of parametric structure for both theoretical analysis and algorithmic designs was brought to prominence with the appearance of the RDT in \cite{StojnicCSetam09,StojnicICASSP10var,StojnicISIT2010binary,StojnicICASSP10block,StojnicRegRndDlt10}. To get a bit clearer picture as to how it relates to what is actually happening here and why the above indication is indeed correct, we in Figures \ref{fig:fig2}, \ref{fig:fig3}, and \ref{fig:fig4} plot the entire so-called \emph{parametric manifold}, ${\mathcal P}{\mathcal M}(\alpha)$ . This manif9old practically shows how $\phi_0$ changes as the parametric pair $(c,x)$ changes and does so for any admissible values of the pair. The shape of the manifold directly correlates to the ability of the descending algorithms to reach the global optimum in the following way: If the manifold has single ``\emph{funneling point}'' (collector of all descending paths) then any descending algorithm will converge to the global optimum. In a more pictorial informal language, no matter where one pours water over the manifold it must ultimately be collected at a \emph{single} point (in the concrete scenario of interest here, the desired funneling/collecting point is $(c,x)=(1,1)$). Figures \ref{fig:fig2}, \ref{fig:fig3}, and \ref{fig:fig4} show three different scenarios  that can develop depending  on the values of $\alpha$ (in general many more scenarios are possible as well). In the first scenario $\alpha=1.5$ and it is sufficiently small to allow for emergence of an alternative funneling point at $(c,x)=(1,0)$. For $\alpha=2.3$ there is only one funneling point $(c,x)=(1,1)$ whereas for $\alpha=1.7932$ we have a limiting scenario below which one starts observing appearance of the second funneling point. Manifold plots in  Figures \ref{fig:fig2}, \ref{fig:fig3}, and \ref{fig:fig4} were shown as functions of $(c,x)$ pairs. Alternatively, one can utilize scaled variant of overlap and instead of $x$ plot $x/c$. The manifolds would shrink or get stretched out but no significant conceptual structural changes would be seen (what actually determines the critical manifold properties is related to behavior for any fixed $c$ and scaling by $c$ is irrelevant in that aspect).


\begin{figure}[h]
\centering
\centerline{\includegraphics[width=1\linewidth]{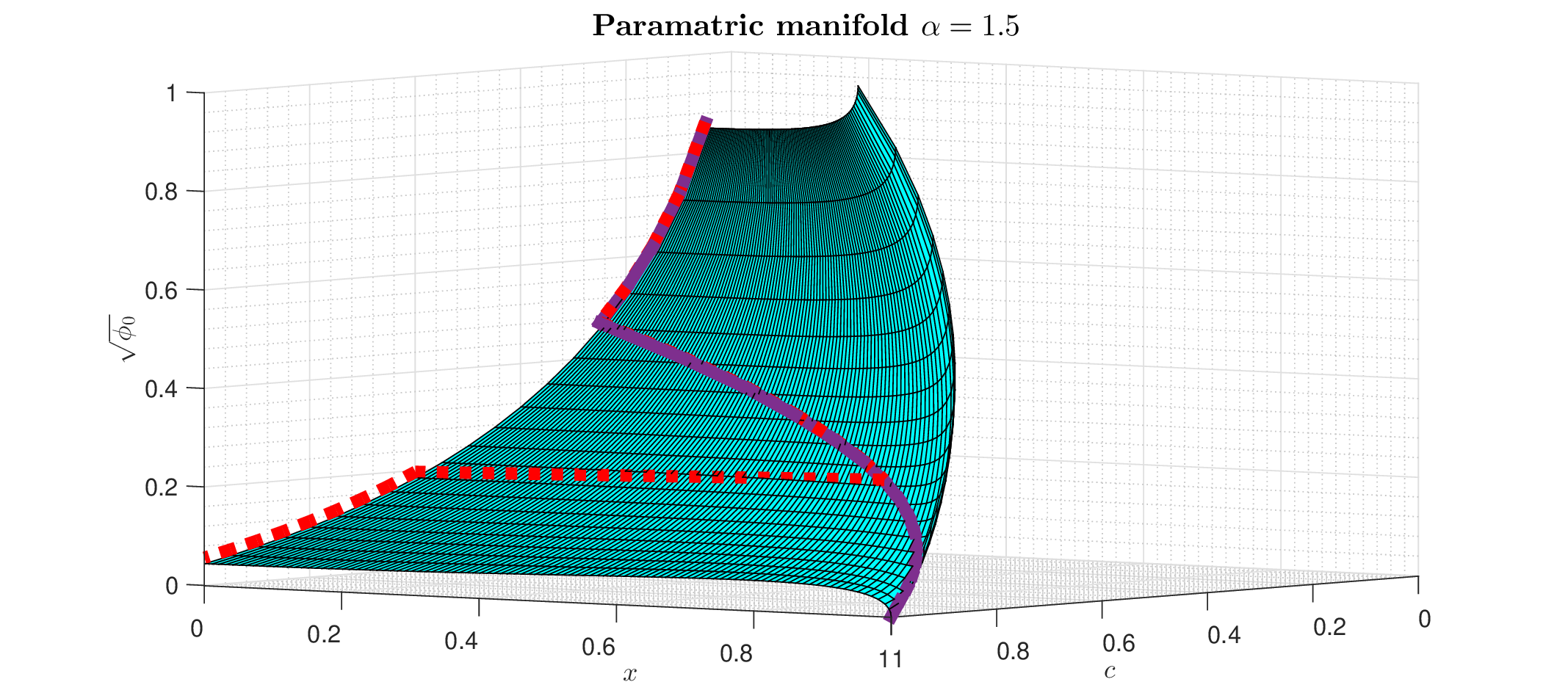}}
\caption{Parametric manifold for $\alpha=1.5$, ${\mathcal P}{\mathcal M}(1.5)$; Red/purple curves -- undesired/desired funneling flows}
\label{fig:fig2}
\end{figure}
The above paints a theoretical and somewhat idealized picture. In practical scenarios many other factors need to align so that the above logic can indeed be of use. \textbf{\emph{(i)}} First, everything discussed above heavily relies on concentrations and effectively assumes that in any practical realization the manifold is indeed as smooth as figures show. For $n\rightarrow\infty$ this is indeed the case with probability go9ng to 1. As $n$ can not be unlimited in practical scenarios one has to be mindful that the above logic might on occasion fail. This is particularly likely to happen if one is close to the limiting scenario (here that would mean if sample complexity ratio is close to $1.7932$). Consequently, a good strategy for practical running of the descending algorithms is to slightly oversample to avoid potential effects of undesired lack of concentrations, \textbf{\emph{(ii)}}The above also assumes that one has a mechanism of controlling $c$ so that it remains below $1$. There are many techniques that can do that but one has to be careful how to implement them to ensure that algorithms avoid getting trapped on the boundaries. \textbf{\emph{(iii)}} Finally as the strong RDT is not in place one needs to keep in mind that the estimated oversampling is likely to go down if one implements lifted RDT (this does not affect conceptual conclusions mentioned earlier, but may have an effect on the concrete value of the minimal needed oversampling ratio). We discuss this in detail in the following section and show  that the critical oversampling value can indeed be lowered.

\begin{figure}[h]
\centering
\centerline{\includegraphics[width=1\linewidth]{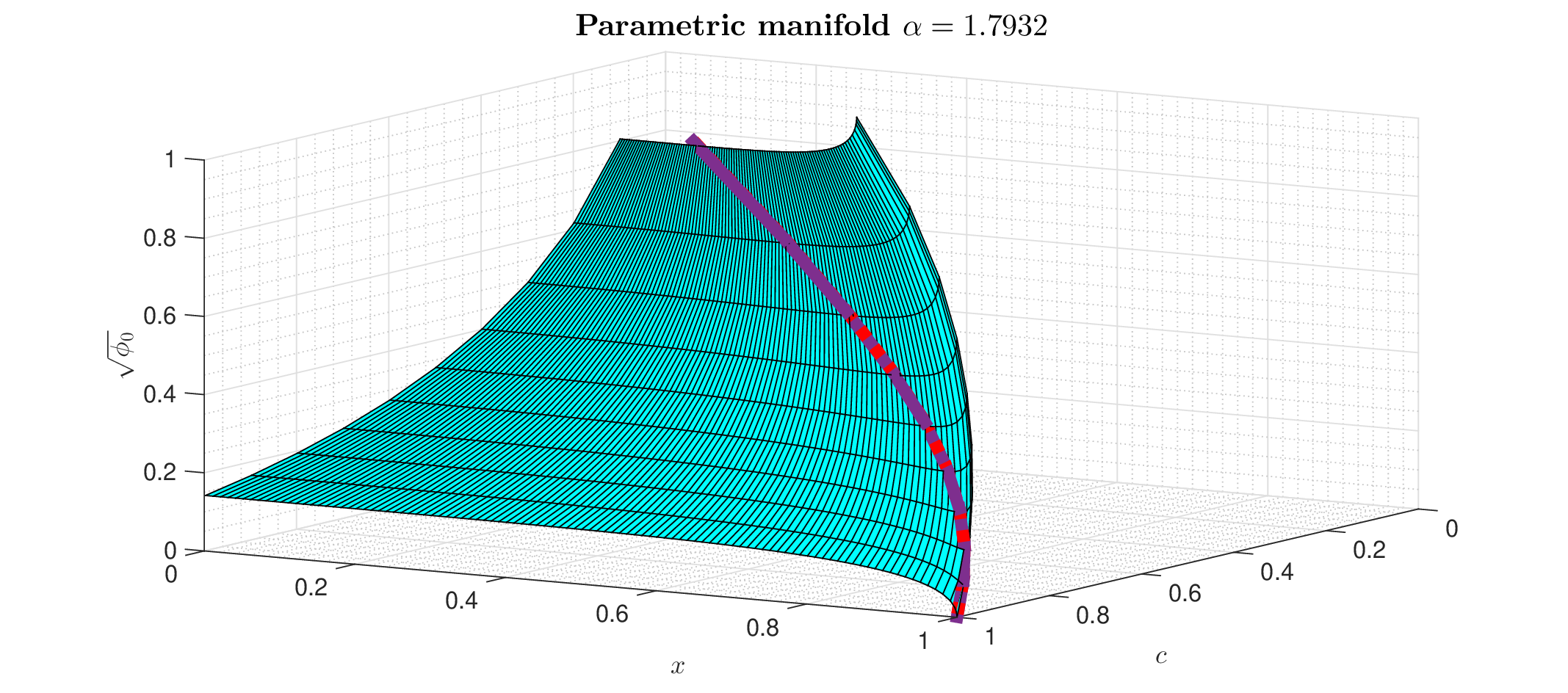}}
\caption{Parametric manifold for $\alpha=1.7932$; Red/purple curves -- undesired/desired funneling flows}
\label{fig:fig3}
\end{figure}

\begin{figure}[h]
\centering
\centerline{\includegraphics[width=1\linewidth]{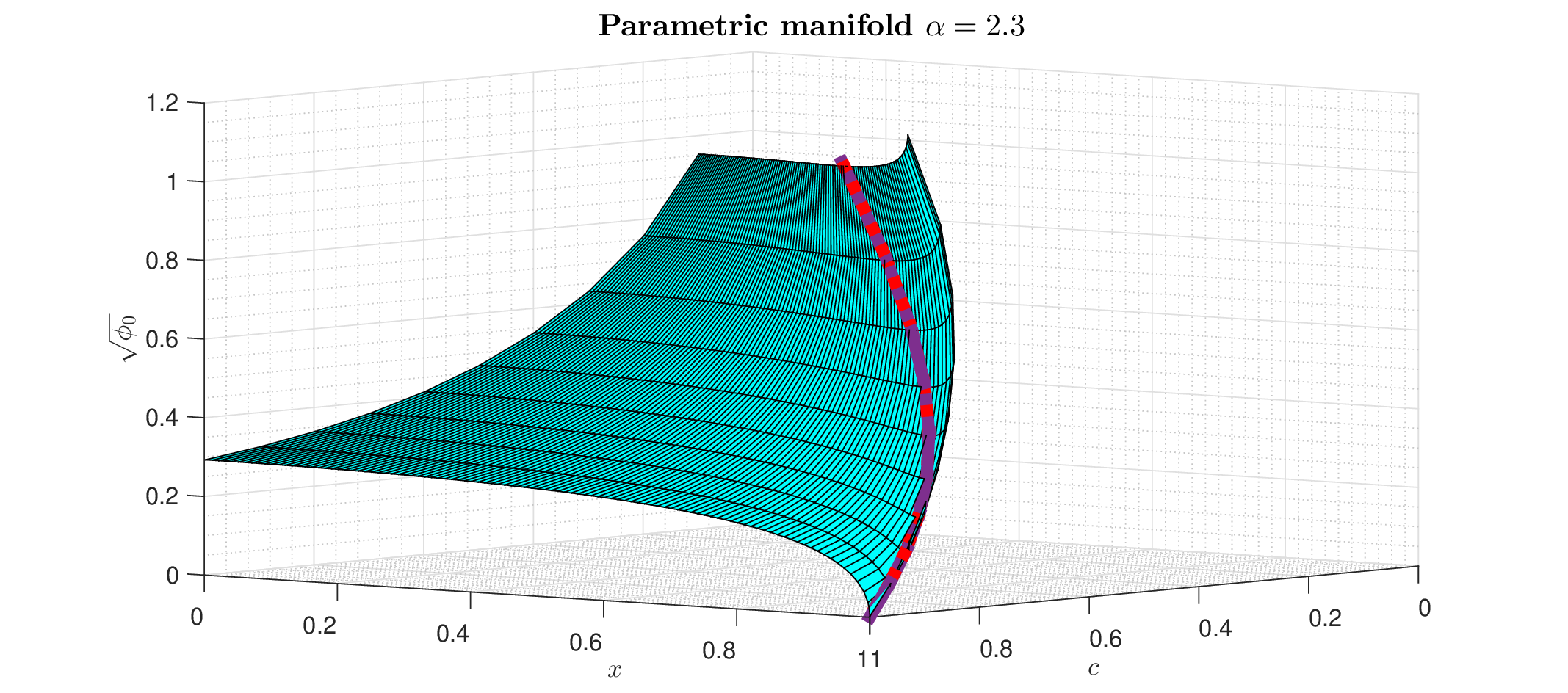}}
\caption{Parametric manifold for $\alpha=2.3$; Red/purple curves -- undesired/desired funneling flows}
\label{fig:fig4}
\end{figure}

\subsection{Beyond optimal objective landscape}
 \label{secbeyondland}

The above discussed algorithmic implications are driven by the structure of the optimal objective related parametric manifold. The intuition suggests that given that fl RDT is likely needed to exactly determine all associated quantities, the above manifold study can indeed be among the key driving forces behind success of practically feasible algorithms. However, many other intrinsic features known or unknown to exist within random structures may have an impact as well.

As the theoretical limit of the sample complexity for the scenario studied here is $1$, one can clearly observe that the above discussion indicates a potential existence of the so-called computational gap (C-gap). C-gap is a performance mismatch between practically feasible and computationally unrestrictive algorithms and is observed in many well known random optimization problems \cite{MMZ05,GamarSud14,GamarSud17,GamarSud17a,AchlioptasR06,AchlioptasCR11,GamMZ22}. While the algorithms that we consider here are in no way generic representatives of the best practically feasible ones, they do come form a class of fairly successful ones that often do stand among practically best ones.

The source of  C-gaps has been studied to a great extent  over the last decade. We here briefly mention two particular lines of work that have in a way dominated recent literature: (i) the first one relates to the so-called \emph{Overlap gap property} (OGP) \cite{Gamar21,GamarSud14,GamarSud17,GamarSud17a,AchlioptasCR11,HMMZ08,MMZ05}; and (ii) the second one to the so-called \emph{Local entropy} (LE) \cite{Bald15,Bald16,Bald20}. As we will see below both properties attempt to demystify clustering organizations of optimal or ``near'' optimal solutions.

The main idea behind the OGP approach \cite{Gamar21,GamarSud14,GamarSud17,GamarSud17a,AchlioptasCR11,HMMZ08,MMZ05} is to connect algorithmic efficacy with a lack of gaps in attainable (near) optimal solutions pairs Hamming distances. For several random graph problems it is known that the transition from presence to absence of OGP precisely matches the transition from presumed algorithmic hardness to known algorithmic solvability \cite{Gamar21,GamarSud14,GamarSud17,GamarSud17a}. Recent OGP studies on symmetric binary perceptrons (SBP) from  \cite{GamKizPerXu22,BarbAKZ23} together with corresponding algorithmic considerations from \cite{BanSpen20} indicate similar OGP role. While the appearance of \cite{LiSch24} disproves OGP generic hardness implication in a non-trivial nature (earlier trivial disproving examples were usually viewed as exceptions), its relevance for particular algorithms or other optimization problems remains. It is sufficient just to recall on breakthrough results of \cite{Montanari19} where solvability in polynomial time of the famous (SK) model \cite{SheKir72} is shown to be directly implied by the widely believed absence of OGP
(further results on p-spin SK extensions \cite{AlaouiMS22,AlaouiMS21} and more sophisticated OGPs  \cite{Kiz23,HuangS22} are available as well; moreover, related spherical spin-glass models results were obtained earlier in  \cite{Subag17,Subag17a,Subag21,Subag24}). Despite its role in generic algorithmic hardness being undetermined, the OGP's presence certainly precludes practically efficient specific  algorithmic classes \cite{GamKizPerXu22}.

Studying clustering organization and associated algorithmic hardness implications via entropies is an alternative to the above OGP. Focusing on binary peceptrons,  \cite{Huang13,Huang14}  looked at the \emph{typical} solutions entropy and predicated via replica methods that such solutions are in complete so-called \emph{frozen} isolation (\cite{PerkXu21,AbbLiSly21a,AbbLiSly21b} proved these predictions for SBP). \cite{Bald15,Bald16,Bald20} considered a stronger entropic refinement and proposed studying  \emph{local entropy} (LE) of \emph{atypical} well connected clusters. While predominant typical solutions might be completely disconnected  and unreachable though local searches \cite{Huang13,Huang14,PerkXu21,AbbLiSly21b}, there may still exist  well-connected but rare (subdominant) clusters.  Moreover, \cite{Bald15,Bald16,Bald20}  then predicted that precisely such rare clusters are found by efficient algorithms (a sampling type of justification for SBP can be found in,  e.g., \cite{ElAlGam24}). Provided that his phenomenology is indeed in place, the properties of rare clusters directly impact the C-gap existence. \cite{Bald15,Bald16,Bald20} additionally  speculate that particular LE's proeprties (breakdown, monotonicity, negativity, etc.) play a key role. For a further support of such a view see, e.g.,\cite{AbbLiSly21a,BarbAKZ23,Stojnicabple25}. Keeping all of this in mind, studying  potential impact of both OGP and LE on phase retrieval problems/algorithms might be of great interest. In particular, adapting the machinery introduced in \cite{Stojnicnflldp25,Stojnicsflldp25,Stojnicabple25} so that it accommodates such studies is the first next step.

\section{Lifted RDT}
 \label{sec:liftrdt}

 Since the strong random duality is not in place the above RDT bounds are expected to be \emph{strict} which then implies that  they can be further lifted. With the development of \emph{fully lifted} (fl) RDT \cite{Stojnicflrdt23} a precise characterization of the optimal values is possible. However, to fully implement the fl RDT a sizeable set of numerical evaluations needs to be conducted. Instead of doing so, we consider a bit more convenient  \emph{partially lifted} (pl) RDT variant \cite{StojnicLiftStrSec13,StojnicMoreSophHopBnds10,Stojnicinjdeeprelu24}. Although numerically less intensive, pl RDT is able to provide a strong improvement over plain RDT as well.

As discussed on many occasions in  \cite{Stojnicinjdeeprelu24,StojnicLiftStrSec13,StojnicMoreSophHopBnds10}, the pl RDT relies on similar principles as the plain RDT. A notable exception though is that now one considers \emph{partially lifted random dual}. The following theorem  (a partially lifted analogue to Theorem \ref{thm:thm1}) introduces it.
\begin{theorem} Assume the setup of Theorem \ref{thm:thm1} with the elements of $A\in\mR^{m\times n}$ ($\g^{(0)}\in\mR^{m\times 1}$ and $A_{:,2:n}\in\mR^{m\times (n-1)}$), $\g^{(1)}\in\mR^{m\times 1}$, and  $\h^{(1)}\in\mR^{(n-1)\times 1}$  being iid standard normals. Consider two positive scalars $c$ and $x$  ($0\leq x \leq c$) and set $r\triangleq \sqrt{c-x^2}$. Let $c_3>0$ and
\vspace{-.0in}
\begin{eqnarray}
\cG & \triangleq & \lp A,\g^{(1)},\h^{(1)}\rp = \lp\g^{(0)},A_{:,2:n},\g^{(1)},\h^{(1)}\rp  \nonumber \\
\phi(\x,\z,\y) & \triangleq &
 \lp \||\g^{(0)}|-|\z| \|_2^2
  +\y^T \g^{(0)}x   +  \y^T \g^{(1)}\|\x_{2:n}\|_2 + \lp \x_{2:n}  \rp^T\h^{(1)}\|\y\|_2  -\y^T  \z  \rp
\nonumber \\
 \bar{f}_{rd}(c,x;\cG) & \triangleq &
 \min_{\|\x_{2:n}\|_2=r,\z} \max_{\|\y\|_2=r_y}  \phi(\x,\z,\y)
  \nonumber \\
  \bar{\phi}_0 & \triangleq & \max_{r_y>0}\lim_{n\rightarrow\infty} \frac{1}{n}
 \lp
 \frac{c_3}{2} r^2r_y^2 -
\frac{1}{c_3} \log \lp \mE_{\cG_{(2)}} e^{ - c_3 \bar{f}_{rd}(\cG) } \rp   \rp .\label{eq:plta16}
\vspace{-.0in}\end{eqnarray}
One then has \vspace{-.02in}
\begin{eqnarray}
  \lim_{n\rightarrow\infty}\mP_{ A } (f_{rp} (c,x; A )   > \bar{\phi}_0)\longrightarrow 1,\label{eq:plta17a0}
\end{eqnarray}
and
\begin{eqnarray}
\hspace{-.0in}(\bar{\phi}_0  > 0)    \Longrightarrow \lp \lim_{n\rightarrow\infty}\mP_{ A } (f_{rp} (c,x; A )   >0)\longrightarrow 1 \rp
 \Longrightarrow \lp \lim_{n\rightarrow\infty}\mP_{A} \lp \mbox{PR is (uniquely) solvable} \rp \longrightarrow 1\rp.\label{eq:plta17}
\end{eqnarray}
 \label{thm:thm2}
\end{theorem}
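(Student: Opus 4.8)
\noindent\emph{Proof strategy.} The plan is to follow the same four-step RDT skeleton used for Theorem \ref{thm:thm1}, the only genuinely new ingredient being that the plain Gaussian comparison is replaced by its exponential-moment (``lifted'') counterpart carrying the free parameter $c_3>0$, which is held fixed throughout (and over which one may optimize the bound afterwards). First I would observe that, exactly as for Theorem \ref{thm:thm1}, both implication chains in (\ref{eq:plta17}) are immediate consequences of the probabilistic lower bound (\ref{eq:plta17a0}) together with Lemma \ref{lemma:lemma1} (for the solvability conclusion one invokes (\ref{eq:plta17a0}) with $c=1$, letting $x$ range over all $x\neq 1$). Thus the whole task reduces to establishing (\ref{eq:plta17a0}). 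As in the proof of Theorem \ref{thm:thm1} I would condition on $\g^{(0)}=A_{:,1}$, leaving only the standard Gaussian block $A_{:,2:n}$ random, and then, for an arbitrary but fixed $r_y>0$, restrict the inner maximization over $\y$ in the random primal (\ref{eq:ta11}) to the sphere $\|\y\|_2=r_y$. Since shrinking the feasible set of an inner $\max$ can only decrease it, this yields deterministically $n f_{rp}(c,x;A)\ge F(r_y;A)$, where $F(r_y;A)$ denotes the min-max on the right-hand side of (\ref{eq:ta11}), before the $\frac{1}{n}$ normalization, with the extra constraint $\|\y\|_2=r_y$ imposed.

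The core step is to control the exponential moment of $F(r_y;A)$ by means of the generalized Gordon comparison of \cite{Stojnicgscomp16,Stojnicgscompyx16} at level $c_3$. On the constraint set the bilinear form $\y^T A_{:,2:n}\x_{2:n}$ has variance $\|\x_{2:n}\|_2^2\|\y\|_2^2=r^2r_y^2$, and its Gordon surrogate is exactly the random part $\y^T\g^{(1)}\|\x_{2:n}\|_2+(\x_{2:n})^T\h^{(1)}\|\y\|_2$ of $\phi(\x,\z,\y)$; after balancing the variances of the two processes (by adjoining a single independent Gaussian, whose contribution is the term $\frac{c_3}{2}r^2r_y^2$ to the log-cumulant), I expect to arrive at $\mE_{A_{:,2:n}}e^{-c_3 F(r_y;A)}\le e^{-\frac{c_3^2}{2}r^2r_y^2}\,\mE_{\cG_{(2)}}e^{-c_3\bar{f}_{rd}(\cG)}$, in which $\bar{f}_{rd}(\cG)$ is precisely the fixed-$r_y$ partially lifted random dual of (\ref{eq:plta16}). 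A Chernoff--Markov estimate then gives, for every $\epsilon>0$, $\mP_A\lp F(r_y;A)\le \frac{c_3}{2}r^2r_y^2-\frac{1}{c_3}\log\mE_{\cG_{(2)}}e^{-c_3\bar{f}_{rd}(\cG)}-n\epsilon\rp\le e^{-c_3 n\epsilon}\to 0$. Combining this with $n f_{rp}\ge F(r_y;A)$, with the concentration of the $\frac{1}{n}$-scaled log-cumulant in $\g^{(0)}$, with $n\to\infty$, with the supremum over $r_y>0$, and with $\epsilon\downarrow 0$, converts it into $\lim_{n\to\infty}\mP_A(f_{rp}(c,x;A)>\bar{\phi}_0)\to 1$, which is (\ref{eq:plta17a0}). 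In contrast with the strong-random-duality ``double-checking'' step of plain RDT in Section \ref{sec:ubrdt}, no reverse argument is needed here, since only the lower-bound direction is asserted.

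The step I expect to be the main obstacle is the exponential-moment comparison itself. One must verify that, after the variance-balancing adjustment, the primal process $\y^T A_{:,2:n}\x_{2:n}$ and its surrogate satisfy the covariance hypotheses of the generalized Gordon inequality --- equal variances, and the correct signs of the cross-covariances over the $(\x_{2:n},\y)$ index pairs on the product of the two spheres --- and that the induced correction is exactly $\frac{c_3}{2}r^2r_y^2$; this is what pins down the particular shape of $\bar{\phi}_0$ in (\ref{eq:plta16}). The remaining pieces --- the one-sided sphere restriction on $\y$, the upgrade from an exponential-moment bound to a high-probability one via Markov's inequality, the measure concentration of the $\frac{1}{n}$-scaled objective in $\g^{(0)}$ (and overall), and the routine uniform interchange of $\max_{r_y>0}$ with the $n\to\infty$ limit --- all go through exactly as in the plain RDT treatment of Section \ref{sec:ubrdt}.
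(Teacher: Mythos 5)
Your proposal is correct and follows essentially the same route as the paper: for fixed $r_y$ you condition on $\g^{(0)}$, restrict $\y$ to the sphere $\|\y\|_2=r_y$, and invoke the partially lifted (exponential-moment) Gaussian comparison with the variance-balancing correction $\frac{c_3}{2}r^2r_y^2$, then pass to high probability via Chernoff--Markov, concentrate, take $n\to\infty$, and maximize over $r_y$, with the implications in (\ref{eq:plta17}) following from (\ref{eq:plta17a0}) and Lemma \ref{lemma:lemma1} exactly as in Theorem \ref{thm:thm1}. The only difference is presentational: the paper simply cites Corollary 3 of \cite{Stojnicgscompyx16} (and Lemma 2 of \cite{StojnicMoreSophHopBnds10}), which already packages the exponential-moment comparison and tail bound you sketch explicitly.
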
\vspace{-.17in}
\begin{proof}
For any  fixed $r_y$ it follows as an automatic application of Corollary 3 from  \cite{Stojnicgscompyx16} (see in particular Section 3.2.1 and equation (86) as well as  Lemma 2 and equation (57) in \cite{StojnicMoreSophHopBnds10}). Terms  $ \y^T \g^{(1)}\|\x_{2:n}\|_2$,  $\x_{2:n}^T\h^{(1)}\|\y\|_2$,  $\y^T  \z  $, and $\frac{c_3}{2}r^2r_y^2$ correspond to the lower-bounding side of equation (86) in \cite{Stojnicgscompyx16} and the left hand side of (86) corresponds to $f_{rp}$. Concentrations and maximization over $r_y$ complete the proof.
\end{proof}

Handling of the above partially lifted random dual can be substantially sped up if one relies on the results from previous sections. To that end we first note that, analogously to (\ref{eq:hrd1}) and (\ref{eq:hrd2}), one can write
\begin{eqnarray}
\bar{f}_{rd}(c,x;\cG)
& = &
 \min_{\z}
 \lp \||\g^{(0)}|-|\z| \|_2^2
  +\|\g^{(0)}x   +  \g^{(1)}r -\z\|_2 r_y - \|\h^{(1)} \|_2 r r_y \rp.
\label{eq:plhrd1}
\end{eqnarray}
Utilizing the \emph{square root trick} introduced on numerous occasions in, e.g., \cite{StojnicLiftStrSec13}  we then further have
\begin{eqnarray}
\bar{f}_{rd}(c,x;\cG)
& = &
\min_{\z}
 \lp \||\g^{(0)}|-|\z| \|_2^2
  +\|\g^{(0)}x   +  \g^{(1)}r -\z\|_2 r_y - \|\h^{(1)} \|_2 r r_y \rp.
\nonumber \\
& = &
\min_{\z,\gamma>0}\max_{\gamma_{sph}>0}
 \lp \||\g^{(0)}|-|\z| \|_2^2
+\gamma   +\frac{\|\g^{(0)}x   +  \g^{(1)}r -\z\|_2^2 r_y^2}{4\gamma} - \gamma_{sph} - \frac{\|\h^{(1)} \|_2^2 r^2 r_y^2 }{4\gamma_{sph}}   \rp.
\nonumber \\
& = &
\min_{\z,\gamma>0}\max_{\gamma_{sph}>0}
 \bar{\cL}(r_y)
 \label{eq:plhrd2}
 \end{eqnarray}
where
\begin{eqnarray}
\bar{\cL}(r_y) = \lp  \sum_{i=1}^{m}\||\g_i^{(0)}|-|\z_i| \|_2^2
+\gamma   + \frac{\sum_{i=1}^{m}\|\g^{(0)}x   +  \g^{(1)}r -\z\|_2^2 r_y^2}{4\gamma} - \gamma_{sph} - \frac{\sum_{i=1}^{n}\|\h^{(1)} \|_2^2 r^2 r_y^2 }{4\gamma_{sph}}   \rp.
 \label{eq:plhrd3}
 \end{eqnarray}
After setting
\begin{eqnarray}
\bar{r}_y=\frac{r_y^2}{4\gamma},
 \label{eq:plhrd4}
\end{eqnarray}
one can optimize over $\z_i$ as in the previous section to obtain analogously to (\ref{eq:hrd7a1}) and (\ref{eq:hrd7a2})
\begin{eqnarray}
|\hat{\z}_i|= \frac{1}{1+\bar{r}_y} \lp |\g_i^{(0)}| + |\g_i^{(0)}x   +  \g_i^{(1)}r| \bar{r}_y \rp.
\label{eq:plhrd5}
 \end{eqnarray}
 and
 \begin{eqnarray}
\min_{\z} \bar{\cL}(r_y)
& = &
\min_{\z}  \lp  \sum_{i=1}^{m}\||\g_i^{(0)}|-|\z_i| \|_2^2
+\gamma   + \frac{\sum_{i=1}^{m}\|\g^{(0)}x   +  \g^{(1)}r -\z\|_2^2 r_y^2}{4\gamma} - \gamma_{sph} - \frac{\sum_{i=1}^{n}\|\h^{(1)} \|_2^2 r^2 r_y^2 }{4\gamma_{sph}}   \rp
\nonumber \\
& = &
  \lp  \sum_{i=1}^{m} \frac{\bar{r}_y}{1+\bar{r}_y} \lp |\g_i^{(0)}| - |\g_i^{(0)}x   +  \g_i^{(1)}r|  \rp^2
+\gamma   - \gamma_{sph} - \frac{\sum_{i=1}^{n}\|\h^{(1)} \|_2^2 r^2 r_y^2 }{4\gamma_{sph}}   \rp.
\label{eq:plhrd6}
 \end{eqnarray}
A combination of  (\ref{eq:plhrd2}) and (\ref{eq:plhrd6}) then gives
\begin{eqnarray}
\bar{f}_{rd}(c,x;\cG)
 & = &
\min_{\z,\gamma>0}\max_{\gamma_{sph}>0}
 \bar{\cL}(r_y)
  \nonumber \\
 & = &
\min_{\gamma>0}\max_{\gamma_{sph}>0}
  \lp  \sum_{i=1}^{m} \frac{\bar{r}_y}{1+\bar{r}_y} \lp |\g_i^{(0)}| - |\g_i^{(0)}x   +  \g_i^{(1)}r|  \rp^2
+\gamma   - \gamma_{sph} - \frac{\sum_{i=1}^{n}\|\h^{(1)} \|_2^2 r^2 r_y^2 }{4\gamma_{sph}}   \rp.
\nonumber \\
  \label{eq:plhrd7}
 \end{eqnarray}
After appropriate scaling   $\gamma \rightarrow \gamma n$, $\gamma_{sph}\rightarrow \gamma_{sph} n $, $r_y\rightarrow r_y\sqrt{n}$, concentrations, statistical identicalness over $i$, Lagrangian duality, and a combination of (\ref{eq:plta16}) and (\ref{eq:plhrd7}) give
\begin{eqnarray}
  \bar{\phi}_0 & \triangleq & \max_{r_y>0}\lim_{n\rightarrow\infty} \frac{1}{n}
 \lp
 \frac{c_3}{2} r^2r_y^2 -
\frac{1}{c_3} \log \lp \mE_{\cG_{(2)}} e^{ - c_3 \bar{f}_{rd}(\cG) } \rp   \rp
 \nonumber \\
& \geq &
\max_{r_y>0}\min_{\gamma>0} \max_{\gamma_{sph}>0}
\lp
 \frac{c_3}{2} r^2r_y^2 + \gamma
 -\frac{\alpha}{c_3} \log \lp \mE_{\cG} e^{ - c_3 \bar{f}_{q}  } \rp
 - \gamma_{sph}  - \frac{1}{c_3} \log \lp \mE_{\cG_{(2)}} e^{ c_3 \frac{\lp \h_i^{(1)}\rp^2r^2r_y^2}{4\gamma_{sph}} }\rp
\rp,
\label{eq:plhrd8}
 \end{eqnarray}
where
 \begin{eqnarray}
 \gamma_x & = & \frac{\bar{r}_y}{1+\bar{r}_y}
 \nonumber \\
 \bar{f}_{q} & = &
\gamma_x \lp |\g_i^{(0)}| - |\g_i^{(0)}x   +  \g_i^{(1)}r|  \rp^2.
 \label{eq:plhrd9}
 \end{eqnarray}
After setting
\begin{eqnarray}\label{eq:plhrd10}
    \bar{A} & = & -2c_3\gamma_x r |g_1| + 2c_3\gamma_x g_1 r \sqrt{c-r^2} \nonumber \\
    \bar{B} & = & -\frac{g_1\sqrt{c-r^2}}{r}   \nonumber \\
    \bar{C} & = & 1 + 2c_3\gamma_x r^2   \nonumber \\
    \bar{D} & = & -c_3\gamma_x  ( g_1^2(1+c-r^2) -2g_1 |g_1| \sqrt{c-r^2}    )     \nonumber \\
    \bar{I}_1 & = &  \frac{e^{\bar{D} +    \frac{\bar{A}.^2}{2\bar{C}}   }   }{2\sqrt{\bar{C}}} \erfc\lp \frac{ \bar{A}+\bar{C}\bar{B} } {\sqrt{2\bar{C}} }  \rp  \nonumber \\
    \bar{A}_2 & = & 2c_3\gamma_x r |g_1| + 2c_3\gamma_x g_1 r \sqrt{c-r^2} \nonumber \\
    \bar{B}_2 & = & -\frac{g_1\sqrt{c-r^2}}{r}   \nonumber \\
    \bar{C}_2 & = & 1 + 2c_3\gamma_x r^2   \nonumber \\
    \bar{D}_2 & = & -c_3\gamma_x  ( g_1^2(1+c-r^2) + 2g_1 |g_1| \sqrt{c-r^2}    )  \nonumber \\
    \bar{I}_2 & = &   \frac{e^{\bar{D}_2  +    \frac{\bar{A}_2.^2}{2\bar{C}_2}   }   }{2\sqrt{\bar{C}_2}} \erfc \lp -\frac{ \bar{A}_2+\bar{C}_2\bar{B}_2 } {\sqrt{2\bar{C}_2} }  \rp ,
 \end{eqnarray}
and solving the integrals one obtains
\begin{eqnarray}
f_{q}^{(lift)}  & = &  \mE_{\cG} e^{-c_3 \bar{f}_{q}}
  =
\int_{-\infty}^{\infty}
( \bar{I}_1 +  \bar{I}_2) \frac{e^{-\frac{ \lp g_1 \rp^2     } {2}  }}{\sqrt{2\pi}} dg_1.
  \label{eq:plhrd11}
 \end{eqnarray}
One then sets
\begin{eqnarray}
c_{3,e}=c_3r_yr,
  \label{eq:plhrd12}
\end{eqnarray}
solves the integral over $\h_i^{(1)}$ and optimizes over $\gamma_{sph}$ to obtain (see, e.g.,  \cite{StojnicMoreSophHopBnds10,Stojnicinjdeeprelu24})
\begin{eqnarray}
\hat{\gamma}_{sph} =\frac{c_{3,e}+\sqrt{c_{3,e}^2+4}}{4}=\frac{c_3rr_y+\sqrt{c_3^2r^2r_y^2+4}}{4}.
  \label{eq:plhrd13}
 \end{eqnarray}
Connecting (\ref{eq:plhrd8}) with (\ref{eq:plhrd11})-(\ref{eq:plhrd13}) gives
\begin{eqnarray}
   \bar{\phi}_0
& \geq &
\max_{r_y>0}\min_{\gamma>0} \max_{\gamma_{sph}>0}
\lp
 \frac{c_3}{2} r^2r_y^2 + \gamma
 -\frac{\alpha}{c_3} \log \lp \mE_{\cG} e^{ - c_3 \bar{f}_{q}  } \rp
 - \gamma_{sph}  - \frac{1}{c_3} \log \lp \mE_{\cG_{(2)}} e^{ c_3 \frac{\lp \h_i^{(1)}\rp^2r^2r_y^2}{4\gamma_{sph}} }\rp
\rp
 \nonumber \\
 & = &
\max_{r_y>0}\min_{\gamma>0}
\Bigg .\Bigg(
 \frac{c_3}{2} r^2r_y^2 + \gamma
 -\frac{\alpha}{c_3} \log \lp f_{q}^{(lift)}\rp
 - \hat{\gamma}_{sph}  +\frac{1}{2c_3} \log \lp  1  -  \frac{c_{3,e}}{2\hat{\gamma}_{sph}}     \rp
\Bigg.\Bigg),
\label{eq:plhrd14}
 \end{eqnarray}
with $f_{q}^{(lift)}$ and $\hat{\gamma}_{sph}$  as in    (\ref{eq:plhrd11})) and  (\ref{eq:plhrd13}), respectively. Since (\ref{eq:plhrd14}) holds for any $c_3$, maximizing the right hand side over $c_3$ further gives
\begin{eqnarray}
 \bar{\phi}_0
& \geq &
 \max_{c_3> 0}  \max_{r_y>0}\min_{\gamma>0}
\Bigg .\Bigg(
 \frac{c_3}{2} r^2r_y^2 + \gamma
 -\frac{\alpha}{c_3} \log \lp f_{q}^{(lift)}\rp
 - \hat{\gamma}_{sph}  +\frac{1}{2c_3} \log \lp  1  -  \frac{c_3rr_y  }{2\hat{\gamma}_{sph}}     \rp
\Bigg.\Bigg).
\label{eq:plhrd16a0}
 \end{eqnarray}

It is not that relevant but we note that inequality signs in (\ref{eq:plhrd8}), (\ref{eq:plhrd14}), and (\ref{eq:plhrd16a0}) can be replaced with equalities. The effect of the above lifting mechanism is presented in Figure \ref{fig:fig5}. For larger values of $c$ (closer to $c=1$) the improvement through lifted RDT is more pronounced.  As $c$ decreases the improvement decreases as well and already for $c=0.8$ it is visually undetectable. Generic conclusions reached earlier when discussing similar plots for plain RDT continue to hold. Namely,, as $c$ increases the curves are lower and flatter for smaller overlaps $x$. The value $\alpha^{(2,par)}\triangleq 1.4$ is chosen so that $c=1$ curve is nonincreasing in any subinterval of $x\in[0,\sqrt{c}]$. As earlier, this basically indicates that as long as sample complexity ratio $\alpha$ is larger than $\alpha^{(2,par)}$, (\ref{eq:ex1a4}) can be solved by \emph{any} descending algorithm. This effectively means that the needed oversampling for success of the phase retrieval descending algorithms is  dropped from $\alpha^{(1)}\approx 1.7932$  to $\alpha^{(2,par)}\approx 1.4$ through the lifting mechanism.
\begin{figure}[h]
\centering
\centerline{\includegraphics[width=1\linewidth]{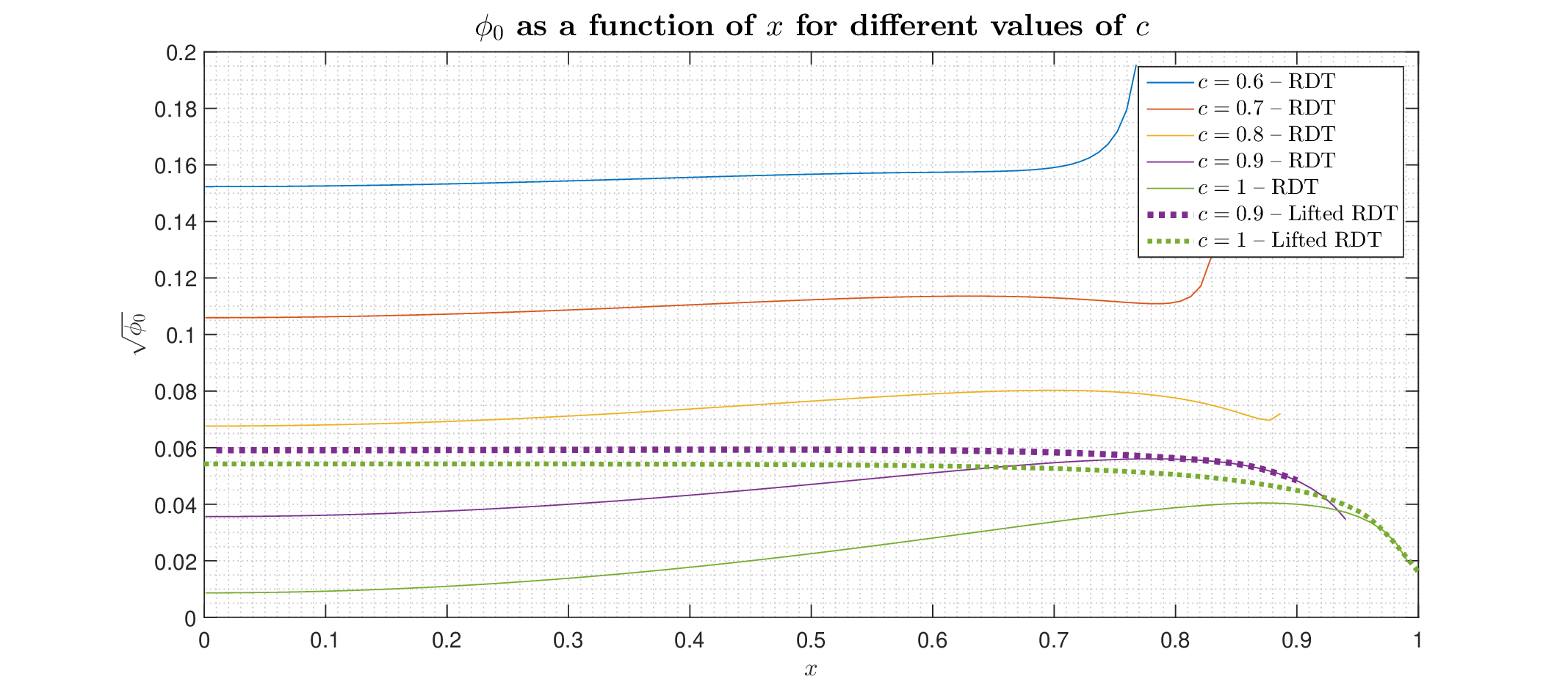}}
\caption{Effect of \emph{lifted} RDT -- $\phi_0$ as a function of $x$ for different values of $c$; $\alpha=1.4$}
\label{fig:fig5}
\end{figure}

Analogously to  Figures \ref{fig:fig2}, \ref{fig:fig3}, and \ref{fig:fig4}, Figure  \ref{fig:fig6} presents the entire \emph{parametric manifold} for $\alpha=1.4$. Both plain RDT and partially lifted RDT  (pl RDT) manifold characterizations are shown in parallel. One can clearly see that pl RDT provides a rather substantial lifting. Most notably, instead of two ``funneling points'', $(c,x)=(1,0)$ and $(c,x)=(1,1)$, manifold after lifting has only one funneling point $(c,x)=(1,1)$. As  discussed earlier, if the manifold has single (desired) ``\emph{funneling point}'' (collector of all descending paths) then any descending algorithm converges to the global optimum. Speaking in a pictorial language, if one pours water on the lower manifold it might funnel down to both  collecting points $(c,x)=(1,0)$ and $(c,x)=(1,1)$. This practically means that if one starts a descending phase retrieval algorithm at an unfavorable part of the manifold, then it might not converge to the desired solution (the one with magnitude $c=1$ and overlap $x=\x^T\bar{\x}=1$). On the other hand, water poured on the upper manifold will always be collected at $(c,x)=(1,1)$, which translates to the fact that any descending algorithm will converge to the desired solution no matter where on the manifold the algorithm starts.  The choice $\alpha\approx1.4$ is  particularly tailored and corresponds to the smallest needed oversampling which in the context of the pl RDT ensures that parametric manifold remains with a single funneling point. Further decrease of $\alpha$ would cause even lifted manifold to have multiple funneling points which would imply a generic failure of the descending phase retrieval algorithms.

\begin{figure}[h]
\centering
\centerline{\includegraphics[width=1\linewidth]{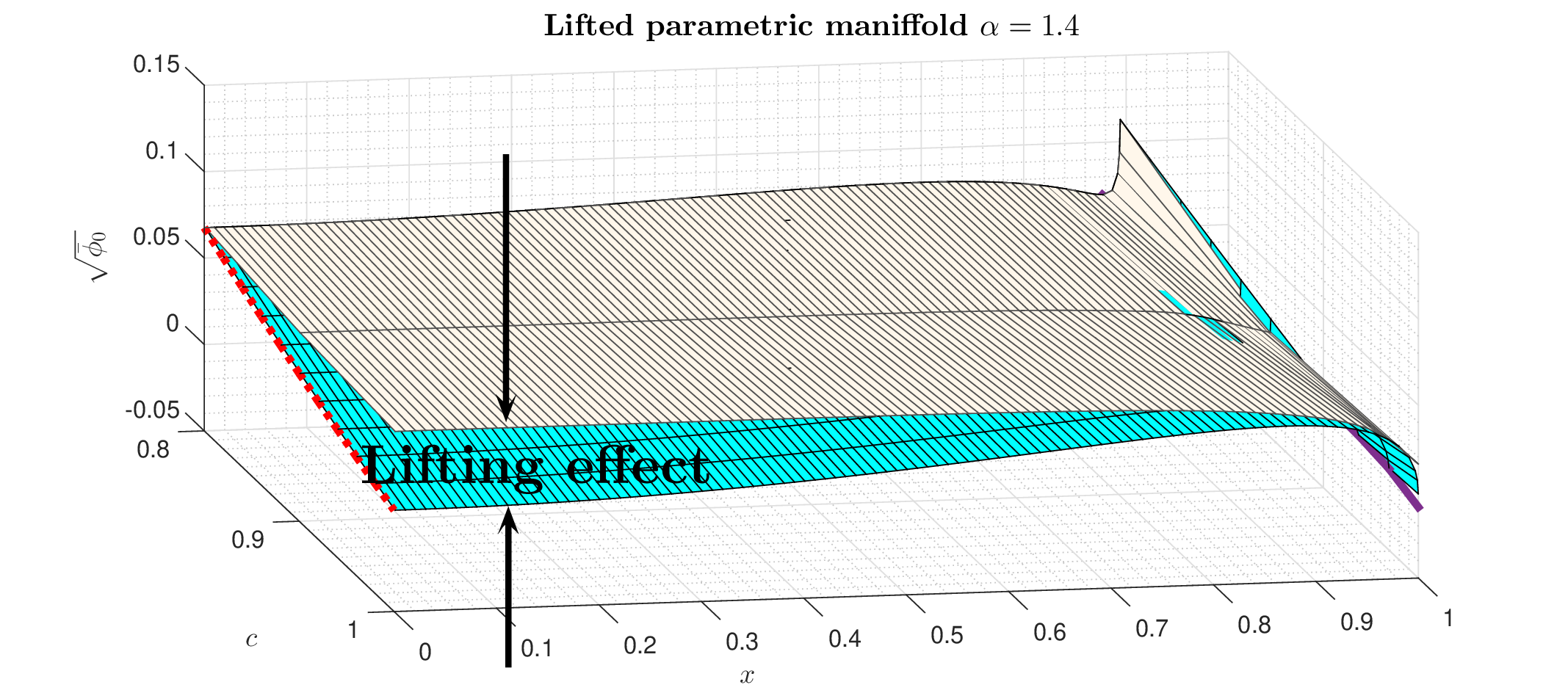}}
\caption{Lifted parametric manifold for $\alpha=1.4$}
\label{fig:fig6}
\end{figure}

All points made earlier regarding the above theoretical picture being a bit idealized remain in place. In particular, one has to be careful that the smoothness of the manifold assumes concentrations which in theory require $n\rightarrow\infty$. In practical  reality, concrete manifold realizations are likely to have \emph{global} tendencies similar to those shown in figures but are also highly unlikely to be \emph{locally} smooth. Instead, one can expect for any finite $n$ \emph{locally jittery} behavior which may cause occasional algorithmic traps. We discuss practical implementations next.

\section{Practical implementations}
 \label{sec:pract}

As discussed earlier, in practical algorithmic implementations, one has to be careful about two key potential obstacles: \textbf{\emph{(i)}} The parametric manifold can be locally jittery causing descending algorithms to get trapped; and \textbf{\emph{(ii)}} Theoretical considerations presented above assume that the norm of the targeted vector, $\sqrt{c}$, remains below $1$ which forces a constrained optimization. This second point makes direct use of the simple (unconstrained) gradient method (similar to Wirtinger flow) somewhat inconvenient. We instead tested a  log barrier version of the gradient. The objective we optimize is the following
\begin{eqnarray}
f_{bar}(t_0;\x) \triangleq  t_0\||A\bar{\x}|^2-|A\x|^2 \|_2^2 + \log\lp 1-\|\x\|_2^2 \rp
=t_0f_{plain}(\x) + \log\lp 1-\|\x\|_2^2 \rp,
\label{eq:practeq1}
\end{eqnarray}
where
\begin{eqnarray}
f_{plain}(\x) \triangleq \||A\bar{\x}|^2-|A\x|^2 \|_2^2
\label{eq:practeq1a0}
\end{eqnarray}
One should note that $f_{plain}(\x)$ is slightly different from the version analyzed earlier as we here utilize (derivative) smoother squared magnitudes  rather than just magnitudes (the analysis presented earlier remains conceptually unaltered if one uses squared magnitude in (\ref{eq:ex1a4a0}); it is just that some optimizations do not admit elegant closed form solutions and one instead has to resort to additional numerical evaluations; see Sections \ref{sec:sqadj} and \ref{sec:sqliftrdt}). Also, we take norm to be smaller than 1 which in a way assumes a prior knowledge of the norm. However, such assumption is not necessary. Rerunning our procedure   $\sim 1/\epsilon$ times (which does not change the complexity order) with $\sim 1/\epsilon$ different norms allows to obtain the same type of results that we present below even without a priori knowing the norm.

We apply optimization procedure $\mathbf{gradback}$ -- gradient with a backtracking to ensure norm constraints are satisfied -- on $f_{bar}(t_0;\x^{(gb,0)}) $ (where $\x^{(gb,0)}$ is a starting point)  and denote output as $\x^{(gb)}$. We then iteratively repeat it for an increasing schedule of $t_0$ (until $t_0$ is sufficiently large, say $10^7$)
\begin{eqnarray}
\bl{\mathbf{gradbar:}} \qquad   \x^{(gb,i+1)}&  =  & \mathbf{gradback}(f_{bar}(t_0^{(i)};\x^{(gb,i)})) \qquad \mbox{and} \qquad  t_0^{(i+1)}=1.2t_0^{(i)}, i=0,1,2,\dots.
\label{eq:practeq2}
\end{eqnarray}
We typically  rely on a spectral initialization for $\x^{(gb,0))}$ and take  $t_0^{(0)}=0.00005$. Also, we refer to the above mechanism as $\mathbf{gradbar}$ and for input $\a^{(in)}$ denote its output as $\a^{(out)}$
\begin{eqnarray}
\a^{(out)} &  =  & \mathbf{gradbar} \lp  \a^{(in)} \rp.
\label{eq:practeq3}
\end{eqnarray}
This handles the constrained optimization potential obstacle mentioned above.

To handle potential local jitteriness we utilize a hybrid alternation between $\mathbf{gradbar}$ and plain gradient (without the norm constraint). Analogously to (\ref{eq:practeq2}), we first define plain gradient as
\begin{eqnarray}
\bl{\mathbf{gradplain:}} \qquad   \x^{(gb,1)}&  =  & \mathbf{gradback}(f_{plain}(\x^{(gb,0)})).
\label{eq:practeq4}
\end{eqnarray}
Moreover, to be in a semantic agreement with (\ref{eq:practeq3}), we refer to the above plain gradient mechanism from (\ref{eq:practeq4}) as $\mathbf{gradplain}$ and for input $\a^{(in)}$ denote its output as $\a^{(out)}$
\begin{eqnarray}
\a^{(out)} &  =  & \mathbf{gradplain}(\a^{(in)}).
\label{eq:practeq5}
\end{eqnarray}
We then have for the $\mathbf{hybrid}$ the following alternating procedure
\begin{eqnarray}
\bl{\mathbf{hybrid:}} \qquad    \a^{(i+1)} &  =  & \mathbf{reshuffle} \lp \mathbf{gradplain} \lp \mathbf{reshuffle} \lp\mathbf{gradbar}\lp  \a^{(i)} \rp \rp \rp \rp, i=0,1,2,\dots,
\label{eq:practeq6}
\end{eqnarray}
where $\mathbf{reshuffle}$ takes a fraction (say $5-10\%$) of components a few times (rarely did we need more than 10 times) and changes their signs. As mentioned above, we take spectral initialization for $\a^{(0)}$, i.e. we take
\begin{eqnarray}
\bl{\mbox{\emph{\textbf{spectral initialization:}}}} \qquad   \a^{(0)}=\x^{(spec)} \triangleq \mbox{max eigenvector} \lp A^T \mbox{diag} \lp  | A\bar{\x} |^2  \rp   A\rp.
\label{eq:practeq6}
\end{eqnarray}
The results obtained through numerical simulations  are shown in Figure \ref{fig:fig7}.  We chose $n=300$ and ran two procedures, $\mathbf{hybrid}$ and  $\mathbf{gradplain}$ with the same input $\a^{(0)}=\x^{(spec)}$. The obtained estimate of $\bar{\x}$ is denoted by $\hat{\x}$,
\begin{figure}[h]
\centering
\centerline{\includegraphics[width=1\linewidth]{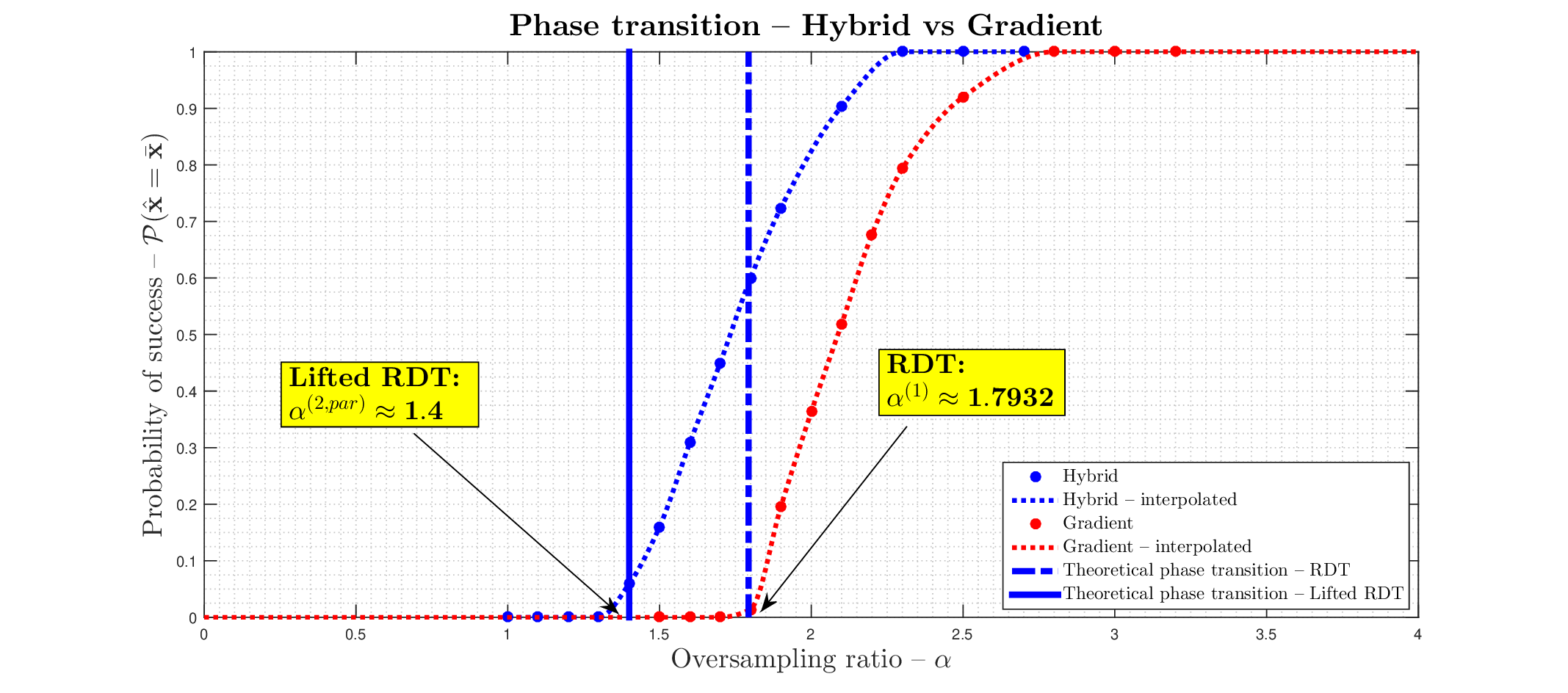}}
\caption{Simulated and theoretical RDT and lifted RDT phase transitions -- Hybrid vs Gradient}
\label{fig:fig7}
\end{figure}
As figure shows, simulated phase transition of the hybrid procedure is fairly close to the theoretical predictions (even though we simulated squared magnitudes objective opposed to the non-squared one used in theoretical calculations; however, we actually did not expect that such a change would make any substantial difference; see Sections \ref{sec:sqadj} and \ref{sec:sqliftrdt}). We should also add that the theoretical analysis is done for $f_{plain}$ with the norm constraint. Since we were running barrier variant the theoretical calculations need to be adjusted from $f_{plain}$ to $f_{bar}$. This is rather trivial. We have rechecked manifold shapes with $f_{bar}$ for a large set of $t_0$ and did not find that they structurally differ from the ones obtained for $f_{plain}$ (graphical representation of this for any $t_0$ would require 4D plots and as such is not feasible). Is Figures \ref{fig:fig1bar1} and  \ref{fig:fig1bar2} we show parametric manifolds of barrier functions $f_{bar}$ for $t_0=12$ and two different oversampling ratios $\alpha=1.4$ and $\alpha=1.7932$. We observe that $f_{bar}$ has very similar behavior as $f_{plain}$. In particular, for smaller $\alpha$ an undesired funneling point emerges. This is precisely along the lines of the above discussion. Further implementation of pl RDT (see Sections \ref{sec:sqadj} and \ref{sec:sqliftrdt}) actually shows that even for $\alpha=1.4$ the manifold gets sufficiently lifted that it has only a single funneling point -- again in a precise agreement with the above discussion.

\begin{figure}[h]
\centering
\centerline{\includegraphics[width=1\linewidth]{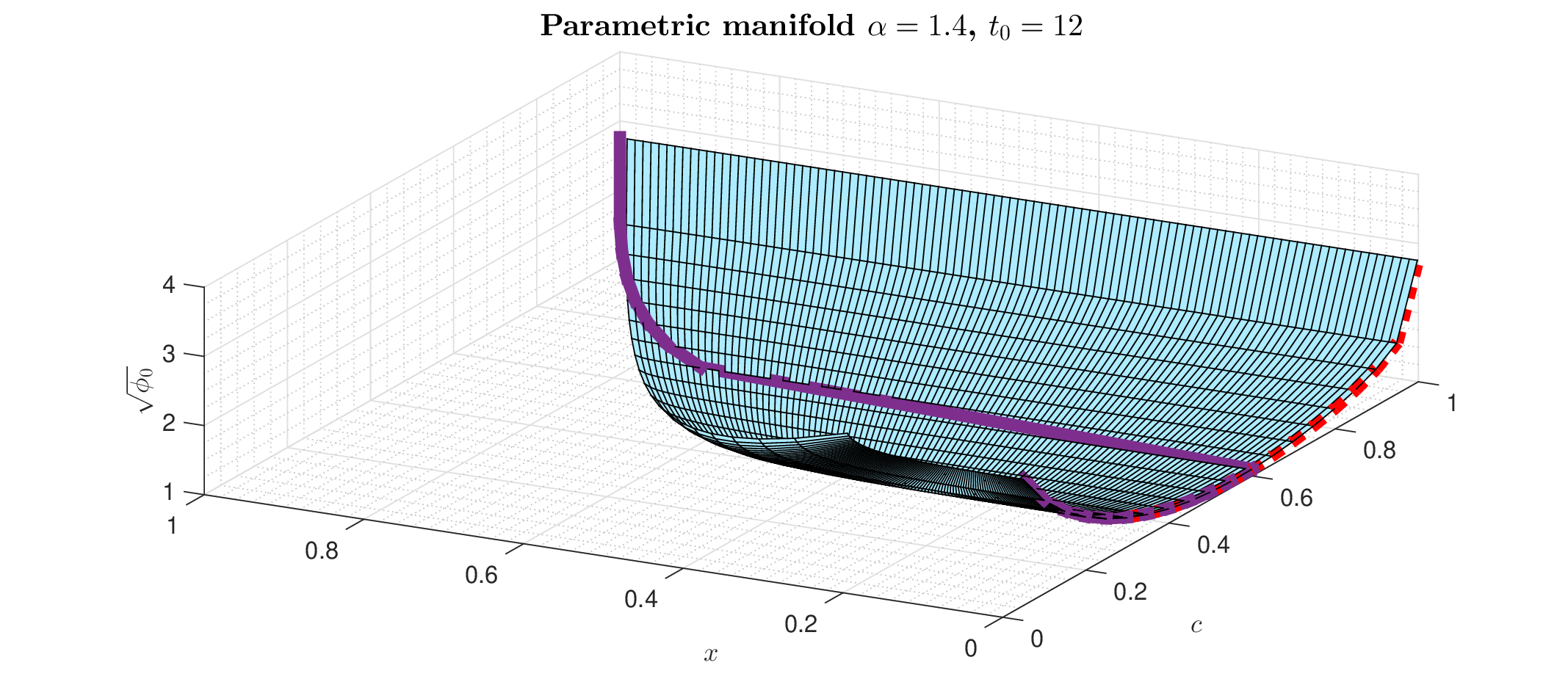}}
\caption{Barrier objective -- parametric manifold; $\alpha=1.4$, $t_0=12$; Purple curve -- path of local optima}
\label{fig:fig1bar1}
\end{figure}

\begin{figure}[h]
\centering
\centerline{\includegraphics[width=1\linewidth]{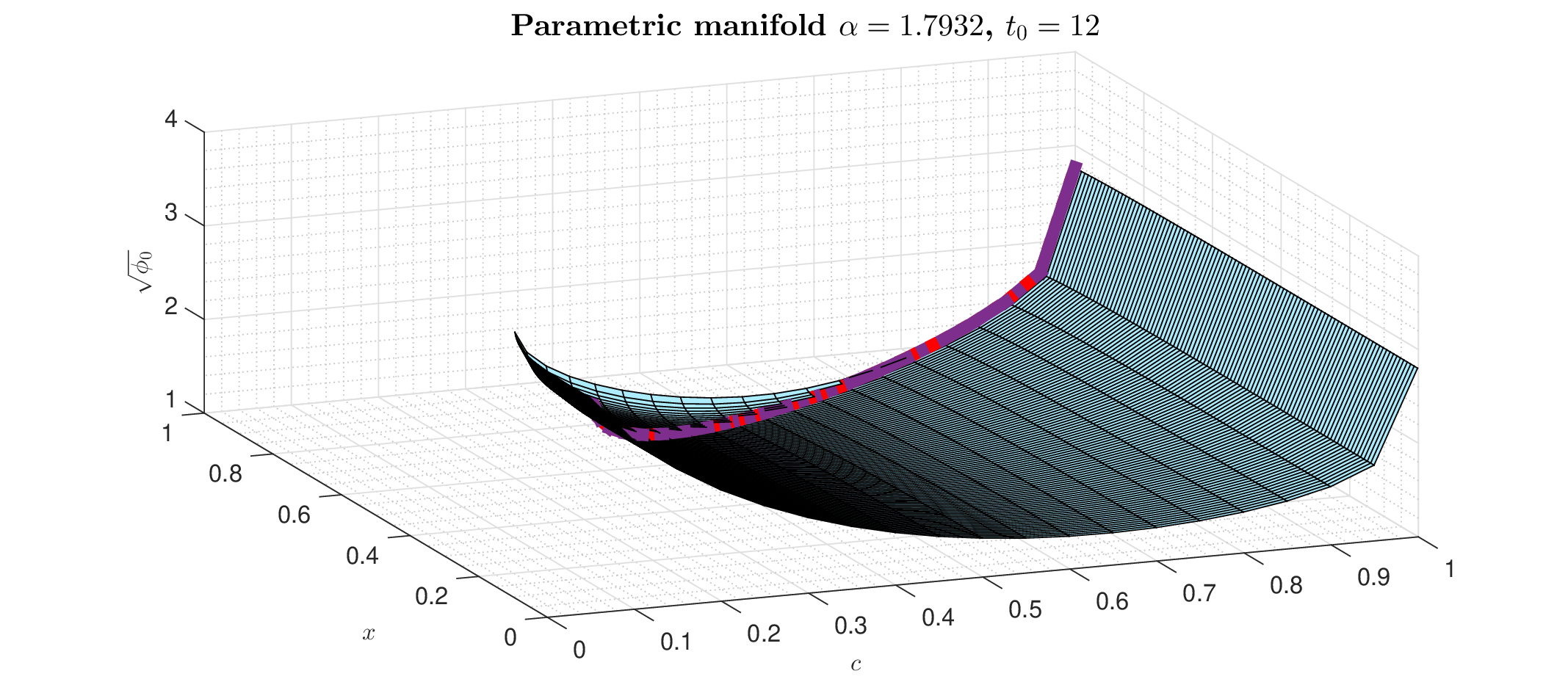}}
\caption{Barrier objective -- parametric manifold; $\alpha=1.7932$, $t_0=12$; Purple curve -- path of local optima}
\label{fig:fig1bar2}
\end{figure}

In our simulations it rarely took more than four iterations of the hybrid mechanism and its success typically coincided with the overlap, $x$, being  $\geq 0.8$ after the first iteration. It is also interesting to note that when we ran numerically a bit more ambitious setup with around 10 hybrid iterations we were able to solve instances with $\alpha\approx 1.5$ which is even closer to the theoretically predicted phase transition (this however takes enormous amount of time and we therefore opted to show in Figure \ref{fig:fig7} less favorable but easier to obtain results). All of this indicates  that parametric manifold is somewhat jittery and in a large portion of allowed $(c,x)$ values fairly flat when  $\alpha$ is close to the theoretical phase transition (flatness of the manifold makes the jittery effects even more pronounced).

Switching to the plain gradient, one observes that its simulated phase transitions are further away from the theoretical predictions. In addition to the above mentioned flat and jittery manifolds, a lack of control of the norm of the targeted vector, $\sqrt{c}$, is another key reason that moves the practical (simulated) phase transition away from the theoretical one. We discuss the effect of the norm constraint in more detail next.

\subsection{Why is the plain gradient away from theoretical phase transitions?}
 \label{sec:plaingrad}

Finite dimensions may imply occasional lack of concentrations which creates a manifold jitteriness and ultimately may cause descending algorithms to get trapped away from the desired solution. Running plain gradient, $\mathbf{gradplain}$, also allows $c=\|\x\|_2^2>1$ through the optimization process which may create additional theoretical and algorithmic obstacles. Namely, since the theoretical results presented in earlier sections did not allow for $c>1$ they consequently do not apply to the plain gradient. In Figure \ref{fig:fig8} we show how $\phi_0$ changes if $c$ is allowed to go beyond 1.
\begin{figure}[h]
\centering
\centerline{\includegraphics[width=1\linewidth]{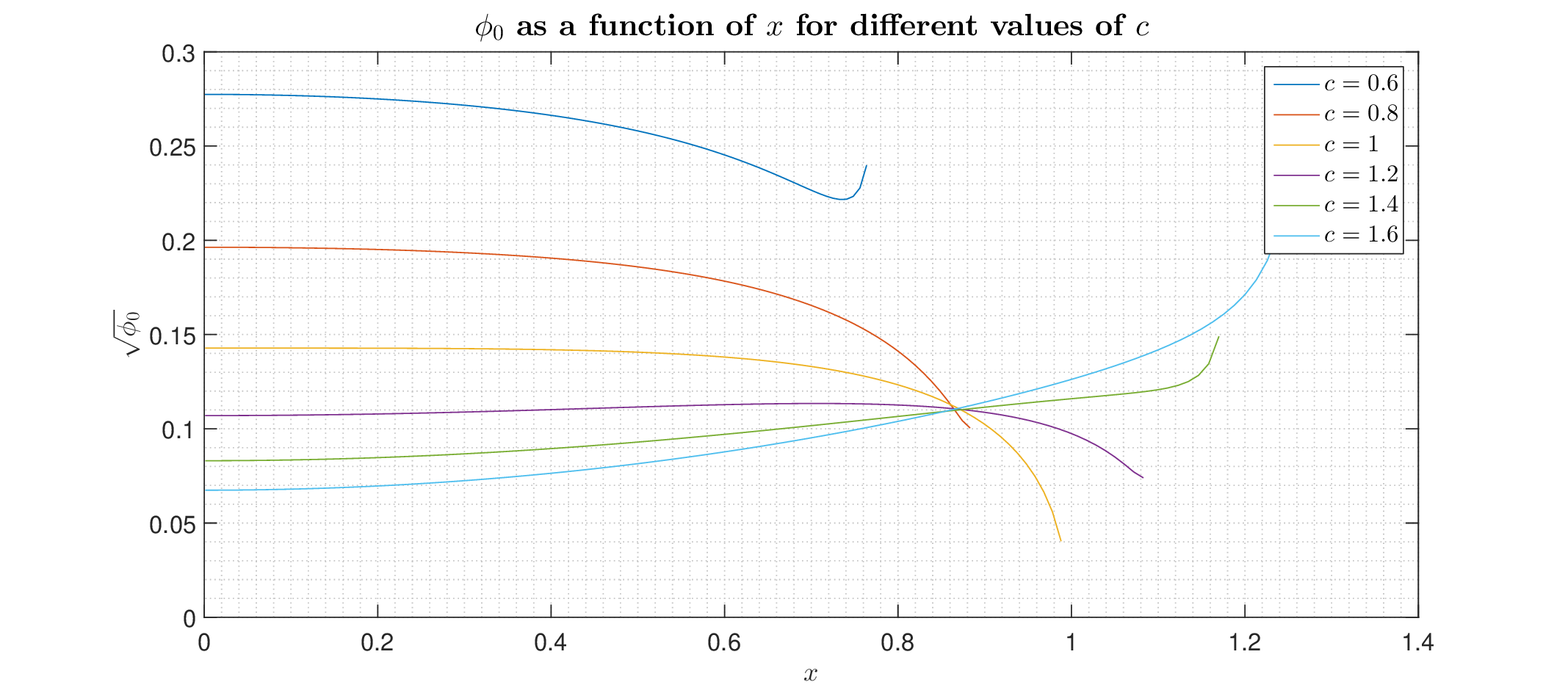}}
\caption{$\phi_0$ as a function of $x$; $c$ is allowed to go above 1; $\alpha=1.7932$}
\label{fig:fig8}
\end{figure}
As can be see, contour lines for $c>1$ start experiencing an unwanted boundary minimum at $x=0$  which (depending on the algorithm starting point) might be one of the (undesired) funneling points. Increasing oversampling ratio $\alpha$ is expected to remedy this. In Figure \ref{fig:fig9} we see that this is indeed the case. Figures \ref{fig:fig10} and \ref{fig:fig11} show the entire manifolds for $\alpha=1.7932$ and $\alpha=2.3$ and one can observe that for $\alpha=2.3$ the lower left corner (large $c$ and small $x$) the manifold is sufficiently lifted that only  one (desired) funneling point $(c,x)=(1,1)$ remains. Despite the fact that all the contour curves and manifold have a favorable shape for $\alpha=2.3$, one still can not make a generic conclusion that the plain gradient exhibits a phase transition phenomenon. In fact, we tested the shape of the contour curves and manifolds for much larger  $\alpha$'s and were always able to find $c>1$ such that the manifold has multiple funneling points.

\begin{figure}[h]
\centering
\centerline{\includegraphics[width=1\linewidth]{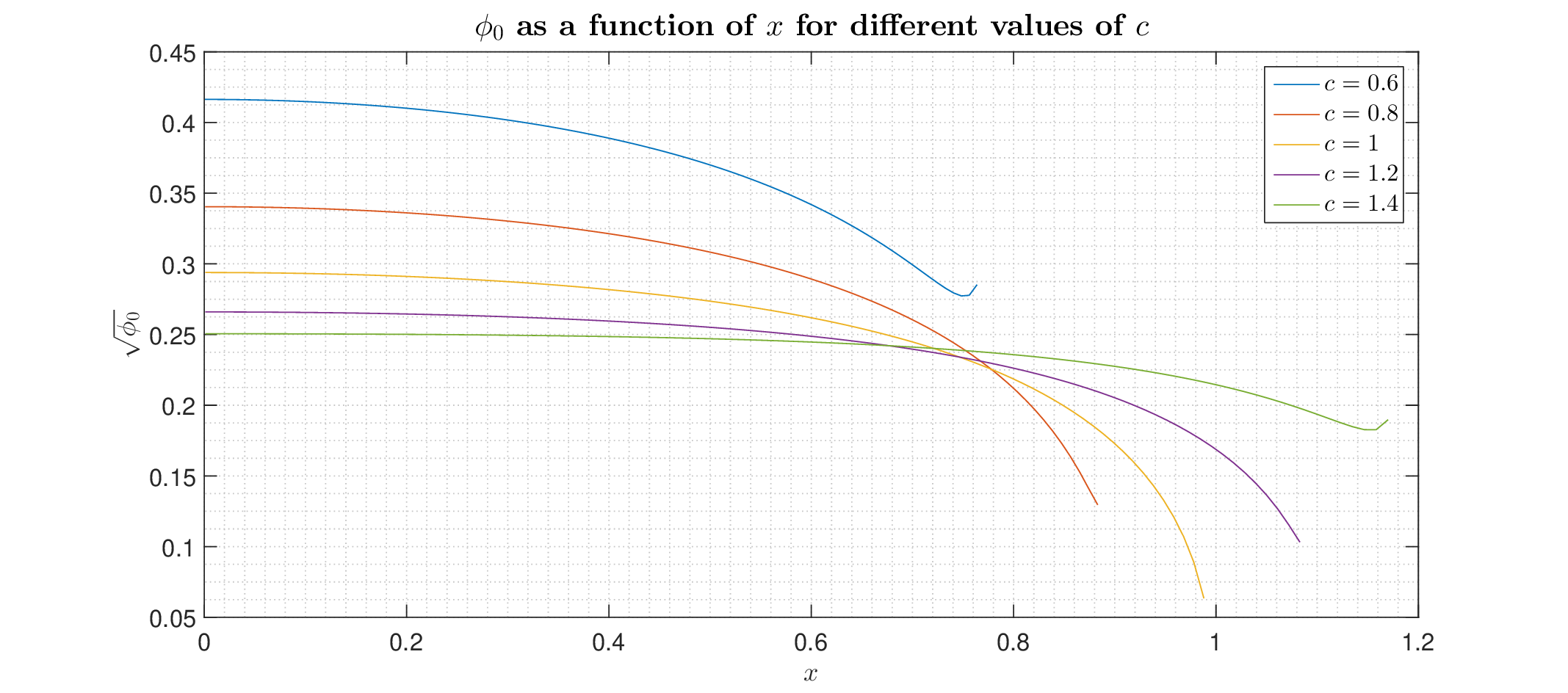}}
\caption{$\phi_0$ as a function of $x$; $c$ is allowed to go above 1; $\alpha=2.3$}
\label{fig:fig9}
\end{figure}

The above then raises a natural question: how does it happen that there are $\alpha$'s for which  everything actually works when one practically runs the plain gradient? The reason everything starts working for $\alpha\approx 2.3$ is that the algorithm itself rarely (if ever) reaches the region where $c>1.4$ (when we ran the $\mathbf{gradplain}$ for $\alpha=2.3$ we observed that maximum squared norm of $\x$ is indeed around $1.4$. As contour lines show, as long as $c<1.4$ there will be no multiple funneling points on the parametric manifold. This effectively means that while the $\mathbf{gradplain}$ does not have a generic phase transition its practical phase transition is in a very good agreement with the theoretical predictions.

One always has to keep in mind that the theoretical predictions are obtained through plain RDT which effectively bounds the true manifold from below. As shown earlier through the lifted RDT, the manifolds are actually expected do be lifted. However, we did not observe much of the lifting effect as $c$ moves away from $1$ (in earlier sections we presented results regarding this effect for $c<1$; we also observed similar trends for $c>1$). This basically means that not much of visible lifting correction (at least on the partial lifting level considered here) is expected for $c=1.4$ when $\alpha=2.3$. Of course, fl RDT might be able to lift things a bit more but given the numerical agreements with the simulated results we do not expect the effect such lifting might have on the location of practical phase transitions (those that account for algorithms' inabilities to reach $c>1.4$ region) to be more pronounced.

\begin{figure}[h]
\centering
\centerline{\includegraphics[width=1\linewidth]{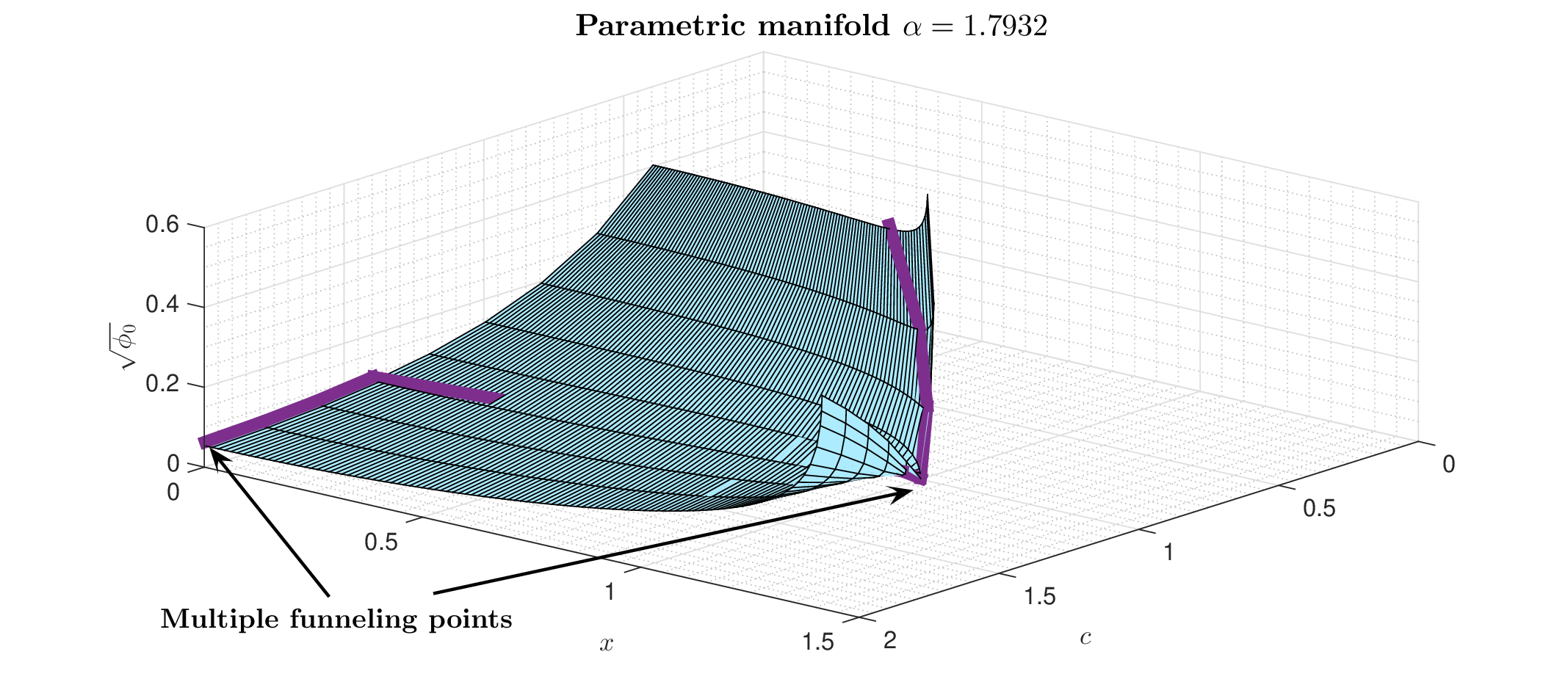}}
\caption{Parametric manifold for $\alpha=1.7932$; Purple curve -- path of local (or boundary) optima}
\label{fig:fig10}
\end{figure}

\begin{figure}[h]
\centering
\centerline{\includegraphics[width=1\linewidth]{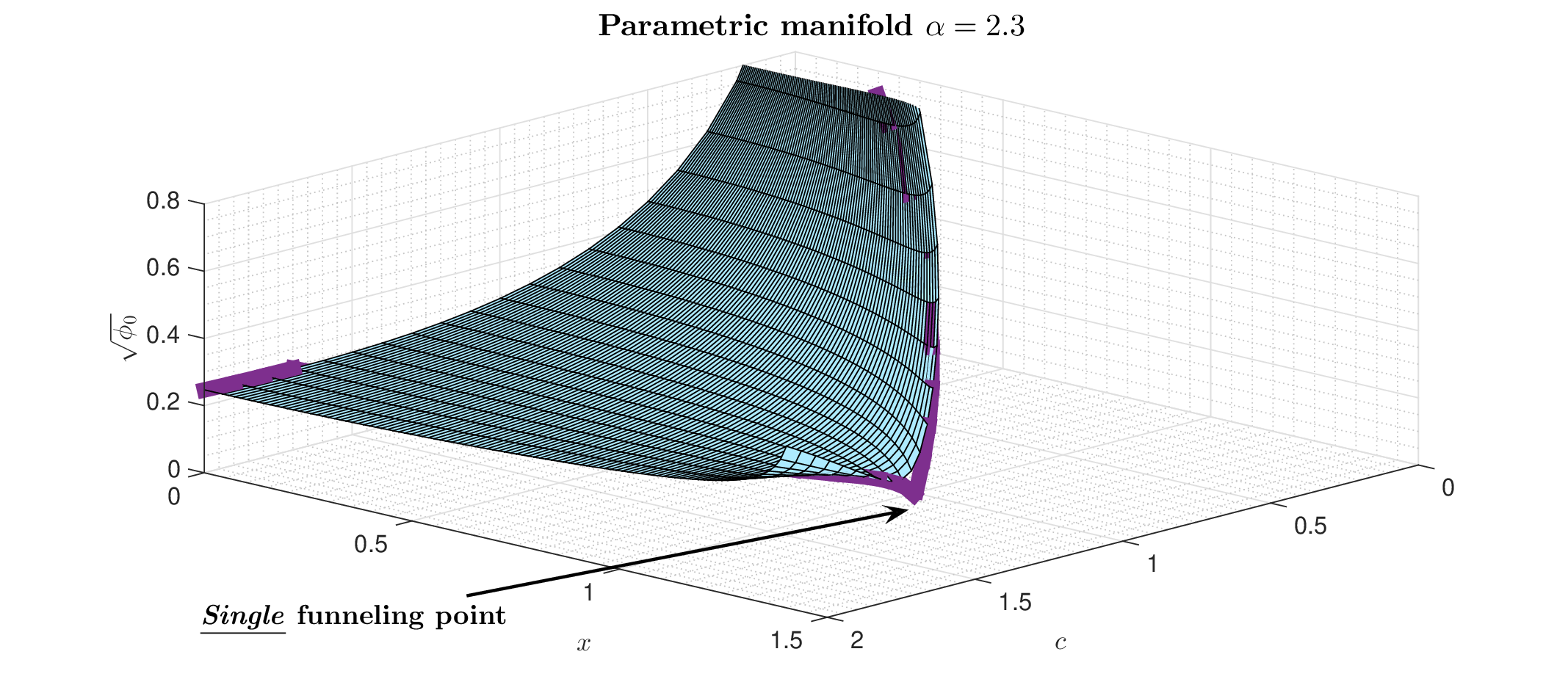}}
\caption{Parametric manifold for $\alpha=2.3$; Purple curve -- path of local (or boundary) optima}
\label{fig:fig11}
\end{figure}

\subsection{Squared magnitudes}
 \label{sec:sqadj}

In the above practical implementations we utilized
 \begin{eqnarray}
 f_{plain}(\x) = \||A\bar{\x}|^2 - |A\x|^2 \|_2^2.
 \label{eq:sqadj1}
\end{eqnarray}
which is slightly different from the one used in  (\ref{eq:ex1a4}) and (\ref{eq:ex1a4a0}) and the ensuing theoretical analyses from Sections \ref{sec:ubrdt} and \ref{sec:liftrdt}. Not much conceptually changes if instead of  (\ref{eq:ex1a4}) and (\ref{eq:ex1a4a0}) one in the theoretical analyses utilizes
 \begin{eqnarray}
 {\mathcal R}^{(sq)}A): \qquad \qquad    \min_{\x,\z} & & \||A\bar{\x}|^2-|\z|^2 \|_2^2\nonumber \\
  \mbox{subject to} & &  A\x=\z. \label{eq:sqadjex1a4}
\end{eqnarray}
and
 \begin{eqnarray}
\hspace{-.8in}\bl{\textbf{\emph{f-pro -- squared magnitudes:}}} \qquad\qquad  \xi^{(sq)}(c,x) \triangleq \min_{\x,\z} & & \||A\bar{\x}|^2-|\z|^2 \|_2^2\nonumber \\
  \mbox{subject to} & &  A\x=\z \nonumber \\
  & & \x^T\bar{\x}=x \nonumber \\
  & & \|\x\|_2^2=c. \label{eq:sqadjex1a4a0}
\end{eqnarray}
Clearly, the only difference between (\ref{eq:ex1a4}) and (\ref{eq:ex1a4a0}) on the one side and (\ref{eq:sqadjex1a4}) and (\ref{eq:sqadjex1a4a0}) on the other is in \emph{squared  magnitudes} in the objectives. While this brings no conceptual complications regarding theoretical analysis it has potentially massive consequences on the accuracy of the residual numerical evaluations which should be taken with a solid degree of caution.

To facilitate the presentation that follows, we mimic what was done in Section \ref{sec:ubrdt} but proceed in a much faster fashion avoiding unnecessary repetitions of the already discussed ideas. We start with the following \emph{squared magnitudes} analogue to Theorem \ref{thm:thm1} and  whenever possible throughout ensuing derivations adopt the principle of trying to recreate quantities analogous to those discussed in the non-squared magnitudes scenarios.

\begin{theorem} Assume the setup of Lemma \ref{lemma:lemma1}. Let the elements of $A\in\mR^{m\times n}$ ($\g^{(0)}\in\mR^{m\times 1}$ and $A_{:,2:n}\in\mR^{m\times (n-1)}$), $\g^{(1)}\in\mR^{m\times 1}$, and  $\h^{(1)}\in\mR^{(n-1)\times 1}$  be iid standard normals. Consider two positive scalars $c$ and $x$  ($0\leq x \leq c$) and set $r\triangleq \sqrt{c-x^2}$. Let
\vspace{-.0in}
\begin{eqnarray}
f^{(sq)}_{rp} & = & \frac{\xi^{(sq)}(c,x)}{n} \nonumber\\
\cG & \triangleq & \lp A,\g^{(1)},\h^{(1)}\rp = \lp\g^{(0)},A_{:,2:n},\g^{(1)},\h^{(1)}\rp  \nonumber \\
\phi^{(sq)}(\x,\z,\y) & \triangleq &
 \lp \||\g^{(0)}|^2-|\z|^2 \|_2^2
  +\y^T \g^{(0)}x   +  \y^T \g^{(1)}\|\x_{2:n}\|_2 + \lp \x_{2:n}  \rp^T\h^{(1)}\|\y\|_2  -\y^T  \z  \rp
\nonumber \\
 f_{rd}^{(sq)}(c,x;\cG) & \triangleq &
\frac{1}{n}  \min_{\|\x_{2:n}\|_2=r,\z} \max_{\|\y\|_2=r_y,r_y>0}  \phi^{(sq)}(\x,\z,\y)
  \nonumber \\
 \phi_0^{(sq)} & \triangleq & \lim_{n\rightarrow\infty} \mE_{\cG} f^{(sq)}_{rd}(c,x;\cG).\label{eq:sqadjtsqadja16}
\vspace{-.0in}\end{eqnarray}
One then has \vspace{-.02in}
\begin{eqnarray}
  \lim_{n\rightarrow\infty}\mP_{ A } (f^{(sq)}_{rp} (c,x; A )   >  \phi^{(sq)}_0)\longrightarrow 1,\label{eq:sqadjta17a0}
\end{eqnarray}
and
\begin{eqnarray}
\hspace{-.3in}(\phi^{(sq)}_0  > 0)   &  \Longrightarrow  & \lp \lim_{n\rightarrow\infty}\mP_{\cG}\lp \frac{\xi^{(sq)}(c,x)}{n} = f^{(sq)}_{rd}(c,x;\cG) >0 \rp \longrightarrow 1\rp \nonumber \\
&\Longrightarrow & \quad \lp \lim_{n\rightarrow\infty}\mP_{ A } (f^{(sq)}_{rp} (c,x; A )   >0)\longrightarrow 1 \rp  \nonumber \\
& \Longrightarrow & \lp \lim_{n\rightarrow\infty}\mP_{A} \lp \mbox{PR is (uniquely) solvable} \rp \longrightarrow 1\rp.\label{eq:sqadjta17}
\end{eqnarray}
 \label{thm:thm3}
\end{theorem}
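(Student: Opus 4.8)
The plan is to mirror the proof of Theorem \ref{thm:thm1} verbatim, since replacing $|\z|$ by $|\z|^2$ (and $|\g^{(0)}|$ by $|\g^{(0)}|^2$) in the objective of (\ref{eq:ex1a4a0}) does not touch any of the structural features that made the Gordon comparison argument go through. First I would note that the constraint set of (\ref{eq:sqadjex1a4a0}) is identical to that of (\ref{eq:ex1a4a0}): after the same rotational-invariance reduction used for Lemma \ref{lemma:lemma1}, one may take $\bar{\x}=[1,0,\dots,0]^T$, set $\g^{(0)}=A_{:,1}$ and $r=\sqrt{c-x^2}$, and rewrite $\xi^{(sq)}(c,x)/n$ as $f^{(sq)}_{rp}(c,x;A)$ with the Lagrangian form in which the linear constraint $A\x=\z$ is dualized by $\y$; the only change from (\ref{eq:rdteq0a3}) is that the quadratic term $\||\g^{(0)}|-|\z|\|_2^2$ becomes $\||\g^{(0)}|^2-|\z|^2\|_2^2$. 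This term depends on $A$ only through the fixed column $\g^{(0)}$ (and on the free variable $\z$), so it is untouched by the Gordon machinery, exactly as in the non-squared case.

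Next I would invoke the random-dual step: conditioning on $\g^{(0)}$, the remaining randomness enters only through the bilinear form $\y^T A_{:,2:n}\x_{2:n}$, and applying the Gordon comparison theorem (Theorem B in \cite{Gordon88}, equivalently Theorem 1, Corollary 1, and Section 2.7.2 of \cite{Stojnicgscomp16}) replaces it by $\y^T\g^{(1)}\|\x_{2:n}\|_2 + (\x_{2:n})^T\h^{(1)}\|\y\|_2$, producing $\phi^{(sq)}(\x,\z,\y)$ and $f^{(sq)}_{rd}(c,x;\cG)$ as defined in (\ref{eq:sqadjtsqadja16}). The concentration-of-measure statement for $f^{(sq)}_{rp}$ is identical to the one quoted before (\ref{eq:ta15}); combined with the Gordon inequality it yields (\ref{eq:sqadjta17a0}), i.e.\ $\lim_{n\to\infty}\mP_A(f^{(sq)}_{rp}>\phi^{(sq)}_0)\to 1$. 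The implication chain in (\ref{eq:sqadjta17}) is then automatic: $\phi^{(sq)}_0>0$ forces $f^{(sq)}_{rp}>0$ asymptotically almost surely via (\ref{eq:sqadjta17a0}); applying this at $c=x=1$ (where $\xi^{(sq)}(1,1)=0$ trivially) together with the argument of Lemma \ref{lemma:lemma1} that $f^{(sq)}_{rp}(1,x;A)>0$ for all $x\neq 1$ is equivalent to unique (up to global phase) solvability gives the conclusion; the middle equality $\xi^{(sq)}(c,x)/n = f^{(sq)}_{rd}(c,x;\cG)$ in (\ref{eq:sqadjta17}) is the part of Gordon's theorem that upgrades the one-sided bound to an identity in the concentrating regime, exactly as in (\ref{eq:ta17}).

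The one point requiring genuine care — and the main obstacle I anticipate — is the direction of the Gordon inequality after the min-max structure is altered by squaring. In the non-squared case the quadratic penalty $\||\g^{(0)}|-|\z|\|_2^2$ is convex in $\z$, and the sign of the comparison (that $f_{rd}$ lower-bounds $f_{rp}$) relies on the usual convexity/concavity alignment in the Gordon setup: the outer problem is a minimization over $(\x_{2:n},\z)$ on the sphere-times-Euclidean domain and a maximization over $\y$ on a sphere. With squared magnitudes, $\||\g^{(0)}|^2-|\z|^2\|_2^2$ is a quartic, non-convex function of $\z$, so I would need to double-check that conditioning on $\g^{(0)}$ isolates the non-convexity into a term that is independent of the Gaussian matrix being compared, leaving the bilinear-in-$A$ part sitting in exactly the position Gordon's theorem requires. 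Since $\z$ appears in the Gordon-relevant term only through $-\y^T\z$ (linear) and $\|\g^{(0)}x+\g^{(1)}r-\z\|$-type expressions after the $\y$-optimization, this should still be fine, but I would be explicit that the quartic term is handled purely at the level of the conditioned objective and never interacts with the comparison inequality. Beyond this, everything is a cosmetic transcription of the proof of Theorem \ref{thm:thm1}, so I would simply write: ``Follows by repeating the proof of Theorem \ref{thm:thm1} verbatim, with $|\z|,|\g^{(0)}|$ replaced by $|\z|^2,|\g^{(0)}|^2$ throughout; the comparison applies after conditioning on $\g^{(0)}$ since the squared-magnitude term does not involve $A_{:,2:n}$.''
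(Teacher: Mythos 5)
Your proposal matches the paper's proof, which simply declares the theorem an immediate consequence of Theorem \ref{thm:thm1} after replacing $|\g^{(0)}|,|\z|$ by their squares in $\phi^{(sq)}$ — exactly the ``verbatim transcription'' you describe, justified because the squared-magnitude term involves only $\g^{(0)}$ and $\z$ and never the compared matrix $A_{:,2:n}$. Your worry about the quartic non-convexity in $\z$ is resolved the way you suspect: the one-sided Gordon comparison used here requires no convexity of the conditioned term, so no additional argument is needed.
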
\vspace{-.17in}
\begin{proof}
Immediate consequence of Theorem \ref{thm:thm1} after adjusting for the squared magnitudes of $\g^{(0)}$ and $\z$ in the definition of $\phi^{(sq)}(\x,\z,\y)$.
\end{proof}

We can then proceed to handle the random dual as in the third part of Section \ref{sec:ubrdt}. Analogously to  (\ref{eq:hrd6} and  (\ref{eq:hrd7}, we first write
\begin{eqnarray}
 \phi^{(sq)}_0 & \triangleq & \lim_{n\rightarrow\infty} \mE_{\cG} f^{(sq)}_{rd}(\cG)
\geq
  \max_{r_y>0} \mE_{\cG}  \min_{\z_i} \cL^{(sq)}_1(r_y),
\label{eq:sqadjhrd6}
 \end{eqnarray}
where
\begin{eqnarray}
\cL^{(sq)}_1(r_y)
  &  = &
\alpha  \lp \||\g_i^{(0)}|^2-|\z_i|^2 \|_2^2
  +\||\g_i^{(0)}x   +  \g_i^{(1)}r| -|\z_i|\|_2^2 r_y \rp
  -  r^2 r_y.
\label{eq:sqadjhrd7}
 \end{eqnarray}
To optimize over $\z_i$ we first find the following derivative
\begin{eqnarray}
\frac{d\cL^{(sq)}_1(r_y)}{d|\z_i|}
  &  = &
\alpha \lp -4|\z_i|(|\g_i^{(0)}|^2-|\z_i|^2 )
  -2(|\g_i^{(0)}x   +  \g_i^{(1)}r| -|\z_i|) r_y\rp.
\label{eq:hrd7a0}
 \end{eqnarray}
 Equalling the above derivative to zero gives the following cubic equation
\begin{eqnarray}
 |\z_i|^3 + p_c |\z_i| + q_c = 0,
\label{eq:hrd7a0a0}
 \end{eqnarray}
 where
\begin{eqnarray}
p_c & = & \frac{1}{2}\lp r_y -2|\g^{(0)}| \rp \nonumber \\
q_c & = &  -\frac{r_y}{2} |\g_i^{(0)}x   +  \g_i^{(1)}r |.
\label{eq:hrd7a0a1}
 \end{eqnarray}
After setting
\begin{eqnarray}
     a_{c,1} &  =  & \lp -\frac{q_c}{2} + \sqrt{ \frac{q_c^2}{4} + \frac{p_c^3}{27} }  \rp^{\frac{1}{3}}
 - \frac{p_c}{3     \lp -\frac{q_c}{2} + \sqrt{ \frac{q_c^2}{4} + \frac{p_c^3}{27} }  \rp^{\frac{1}{3}}  }
\nonumber \\
     a_{c,2} &  =  &  a_{c,1}  \frac{-1-\sqrt{-3}}{2}
     \nonumber \\
      a_{c,3} &  =  &  a_{c,1}  \frac{-1+\sqrt{-3}}{2},
\label{eq:hrd7a0a2}
 \end{eqnarray}
one obtains through the Cardano's formula sets of possible candidates for  optimal $|\z_i|$
\begin{eqnarray}
{\mathcal Z}^{(sq)} =
\begin{cases}
  \{a_{c,1},0 \}, & \mbox{if $\frac{q_c^2}{4} + \frac{p_c^3}{27} \geq 0$}  \\
  \{a_{c,1},a_{c,2},a_{c,3},0 \} , & \mbox{otherwise}.
\end{cases}.
\label{eq:sqadjhrd7a1}
 \end{eqnarray}
Combining  (\ref{eq:sqadjhrd7}) and  (\ref{eq:sqadjhrd7a1}) one further obtains
\begin{eqnarray}
\min_{\z_i} \cL^{(sq)}_1(r_y)
  &  = &
\alpha  \min_{|\z_i|\in{\mathcal Z}^{(sq)}}\lp \||\g_i^{(0)}|^2-|\z_i|^2 \|_2^2
  +\||\g_i^{(0)}x   +  \g_i^{(1)}r| -|\z_i|\|_2^2 r_y \rp
  -  r^2 r_y.
\label{eq:sqadjhrd7a2}
 \end{eqnarray}
 Setting
\begin{eqnarray}
f_q^{(sq)} & \triangleq & \mE_{\cG} \min_{|\z_i|\in{\mathcal Z}^{(sq)}}\lp \||\g_i^{(0)}|^2-|\z_i|^2 \|_2^2
  +\||\g_i^{(0)}x   +  \g_i^{(1)}r| -|\z_i|\|_2^2 r_y \rp,
\label{eq:hrd7a2a0}
 \end{eqnarray}
 easily gives
 \begin{eqnarray}
 \mE_{\cG} \min_{\z_i} \cL^{(sq)} _1(r_y)
     &  = &
\alpha
\mE_{\cG} \min_{|\z_i|\in{\mathcal Z}^{(sq)}}\lp \||\g_i^{(0)}|^2-|\z_i|^2 \|_2^2
  +\||\g_i^{(0)}x   +  \g_i^{(1)}r| -|\z_i|\|_2^2 r_y \rp
  -  r^2 r_y
\nonumber \\
      &  = &
\alpha
  f^{(sq)}_q  -  r^2 r_y.
\label{eq:hrd7a3}
 \end{eqnarray}
Plugging this back in (\ref{eq:sqadjhrd6}) gives
\begin{eqnarray}
 \phi^{(sq)}_0 & \triangleq & \lim_{n\rightarrow\infty} \mE_{\cG} f_{rd}(\cG)
 \nonumber \\
& \geq &
  \max_{r_y>0}
  \lp
\alpha
\mE_{\cG} \min_{|\z_i|\in{\mathcal Z}^{(sq)}}\lp \||\g_i^{(0)}|^2-|\z_i|^2 \|_2^2
  +\||\g_i^{(0)}x   +  \g_i^{(1)}r| -|\z_i|\|_2^2 r_y \rp
  -  r^2 r_y
   \rp
  \nonumber \\
  & = &
  \max_{r_y>0}
  \lp
  \alpha
  f^{(sq)}_q
  -  r^2 r_y \rp.
\label{eq:sqadjhrd7a4}
 \end{eqnarray}

\subsection{Squared magnitudes -- Lifted RDT}
 \label{sec:sqliftrdt}

The above RDT analysis can be lifted relying on the concepts from Section \ref{sec:liftrdt}. One starts by establishing the following \emph{squared magnitudes} analogue to Theorem \ref{thm:thm2}).

\begin{theorem} Assume the setup of Theorem \ref{thm:thm1} with the elements of $A\in\mR^{m\times n}$ ($\g^{(0)}\in\mR^{m\times 1}$ and $A_{:,2:n}\in\mR^{m\times (n-1)}$), $\g^{(1)}\in\mR^{m\times 1}$, and  $\h^{(1)}\in\mR^{(n-1)\times 1}$  being iid standard normals. Consider two positive scalars $c$ and $x$  ($0\leq x \leq c$) and set $r\triangleq \sqrt{c-x^2}$. Let $c_3>0$ and
\vspace{-.0in}
\begin{eqnarray}
\cG & \triangleq & \lp A,\g^{(1)},\h^{(1)}\rp = \lp\g^{(0)},A_{:,2:n},\g^{(1)},\h^{(1)}\rp  \nonumber \\
\phi^{(sq)}(\x,\z,\y) & \triangleq &
 \lp \||\g^{(0)}|^2-|\z|^2 \|_2^2
  +\y^T \g^{(0)}x   +  \y^T \g^{(1)}\|\x_{2:n}\|_2 + \lp \x_{2:n}  \rp^T\h^{(1)}\|\y\|_2  -\y^T  \z  \rp
\nonumber \\
 \bar{f}^{(sq)}_{rd}(c,x;\cG) & \triangleq &
 \min_{\|\x_{2:n}\|_2=r,\z} \max_{\|\y\|_2=r_y}  \phi^{(sq)}(\x,\z,\y)
  \nonumber \\
  \bar{\phi}^{(sq)}_0 & \triangleq & \max_{r_y>0}\lim_{n\rightarrow\infty} \frac{1}{n}
 \lp
 \frac{c_3}{2} r^2r_y^2 -
\frac{1}{c_3} \log \lp \mE_{\cG_{(2)}} e^{ - c_3 \bar{f}^{(sq)}_{rd}(\cG) } \rp   \rp .\label{eq:sqadjplta16}
\vspace{-.0in}\end{eqnarray}
One then has \vspace{-.02in}
\begin{eqnarray}
  \lim_{n\rightarrow\infty}\mP_{ A } \lp \frac{\xi^{(sq)} (c,x) }{n} =  f^{(sq)}_{rp} (c,x; A )   > \bar{\phi}^{(sq)}_0 \rp \longrightarrow 1,\label{eq:sqadjplta17a0}
\end{eqnarray}
and
\begin{eqnarray}
\hspace{-.0in}(\phi^{(sq)}_0  > 0)    \Longrightarrow \lp \lim_{n\rightarrow\infty}\mP_{ A } (f^{(sq)}_{rp} (c,x; A )   >0)\longrightarrow 1 \rp
 \Longrightarrow \lp \lim_{n\rightarrow\infty}\mP_{A} \lp \mbox{PR is (uniquely) solvable} \rp \longrightarrow 1\rp.\label{eq:sqadjplta17}
\end{eqnarray}
 \label{thm:thm4}
\end{theorem}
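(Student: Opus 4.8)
The plan is to reproduce, almost verbatim, the argument behind Theorem~\ref{thm:thm2}, altering only what the passage from plain to squared magnitudes genuinely forces. First I would fix $r_y$ and condition on $\g^{(0)}$. Once $\g^{(0)}$ is frozen, the term $\||\g^{(0)}|^2-|\z|^2\|_2^2$ in $\phi^{(sq)}(\x,\z,\y)$ is a deterministic function of $\z$ alone, so the bilinear Gaussian coupling among the optimization variables is exactly the one already handled in Theorems~\ref{thm:thm1} and~\ref{thm:thm2}. Consequently the partially lifted Gordon-type comparison --- Corollary~3 of \cite{Stojnicgscompyx16} (in particular Section~3.2.1 and equation~(86) there, together with Lemma~2 and equation~(57) of \cite{StojnicMoreSophHopBnds10}) --- applies with the identical dictionary: the terms $\y^T\g^{(1)}\|\x_{2:n}\|_2$, $\lp\x_{2:n}\rp^T\h^{(1)}\|\y\|_2$, $\y^T\z$ and $\frac{c_3}{2}r^2r_y^2$ make up the lower-bounding side of~(86), whose left-hand side is $f^{(sq)}_{rp}$. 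Measure concentration for $f^{(sq)}_{rp}(c,x;A)$ (the analogue of the concentration statement used for $f_{rp}$ in Section~\ref{sec:ubrdt}), concentration of $\bar{f}^{(sq)}_{rd}$, and a final maximization over $r_y>0$ then deliver $f^{(sq)}_{rp}(c,x;A)>\bar{\phi}^{(sq)}_0$ with probability tending to one, i.e.~(\ref{eq:sqadjplta17a0}).

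The implication chain~(\ref{eq:sqadjplta17}) is then immediate via Theorem~\ref{thm:thm3}: because $f^{(sq)}_{rp}=\xi^{(sq)}(c,x)/n$, positivity of $\bar{\phi}^{(sq)}_0$ forces $f^{(sq)}_{rp}>0$ with high probability, and positivity of this scaled objective for every $x\neq 1$ is precisely the uniqueness/solvability certificate recorded in Theorem~\ref{thm:thm3}. No probabilistic ingredient is required beyond those already used in the non-squared case.

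The single genuinely new point --- and the one I expect to demand the most care --- is the inner minimization over $\z$ that produces the effective scalar integrand. With $\||\g^{(0)}|^2-|\z|^2\|_2^2$ replacing $\||\g^{(0)}|-|\z|\|_2^2$, the stationarity condition for $|\z_i|$ is no longer linear but the cubic~(\ref{eq:hrd7a0a0})--(\ref{eq:hrd7a0a1}), so the optimal $|\hat{\z}_i|$ has to be selected as the minimizing element of the Cardano candidate set ${\mathcal Z}^{(sq)}$ from~(\ref{eq:sqadjhrd7a1}) --- exactly the reduction performed in Section~\ref{sec:sqadj} --- and this selection must be threaded through the square-root-trick reformulation and the exponential moment appearing in~(\ref{eq:sqadjplta16}). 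This changes neither the Gordon comparison nor the concentration step; it only renders the residual one-dimensional integrals that define $\bar{\phi}^{(sq)}_0$ more delicate to evaluate numerically, which is the caveat already flagged before the statement.
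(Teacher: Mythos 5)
Your proposal is correct and follows essentially the same route as the paper: the paper proves Theorem~\ref{thm:thm4} by declaring it an immediate consequence of Theorem~\ref{thm:thm2} (exactly as Theorem~\ref{thm:thm3} follows from Theorem~\ref{thm:thm1}), and your unwinding --- conditioning on $\g^{(0)}$ so that the squared-magnitude term is deterministic, invoking the same partially lifted Gordon comparison (Corollary~3 of \cite{Stojnicgscompyx16} with Lemma~2 of \cite{StojnicMoreSophHopBnds10}), then concentration and maximization over $r_y$ --- is precisely the argument that citation encapsulates. Your added remarks on the cubic/Cardano minimization over $\z$ correctly identify it as belonging to the downstream evaluation of $\bar{\phi}^{(sq)}_0$ rather than to the proof of the theorem itself.
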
\vspace{-.17in}
\begin{proof}
It is an immediate consequence of Theorem \ref{thm:thm2} in exactly the same way Theorem \ref{thm:thm3} is an immediate consequence of Theorem \ref{thm:thm1}.
\end{proof}

One can then proceed as in Section \ref{sec:liftrdt} to handle the above partially lifted random dual. We first write analogously to  (\ref{eq:plhrd2})
 \begin{eqnarray}
\bar{f}^{(sq)}_{rd}(c,x;\cG)
 & = &
\min_{\z,\gamma>0}\max_{\gamma_{sph}>0}
 \bar{\cL}(r_y)
 \label{eq:sqadjplhrd2}
 \end{eqnarray}
where
\begin{eqnarray}
\bar{\cL}^{(sq)}(r_y) = \lp  \sum_{i=1}^{m}\||\g_i^{(0)}|^2-|\z_i|^2 \|_2^2
+\gamma   + \frac{\sum_{i=1}^{m}\|\g^{(0)}x   +  \g^{(1)}r -\z\|_2^2 r_y^2}{4\gamma} - \gamma_{sph} - \frac{\sum_{i=1}^{n}\|\h^{(1)} \|_2^2 r^2 r_y^2 }{4\gamma_{sph}}   \rp.
 \label{eq:sqadjplhrd3}
 \end{eqnarray}
After setting
\begin{eqnarray}
\bar{r}_y & =  & \frac{r_y^2}{4\gamma} \nonumber \\
\bar{p}_c & = & \frac{1}{2}\lp \bar{r}_y -2|\g^{(0)}| \rp \nonumber \\
\bar{q}_c & = &  -\frac{\bar{r}_y}{2} |\g_i^{(0)}x   +  \g_i^{(1)}r |.
\nonumber \\
     \bar{a}_{c,1} &  =  & \lp -\frac{\bar{q}_c}{2} + \sqrt{ \frac{\bar{q}_c^2}{4} + \frac{\bar{p}_c^3}{27} }  \rp^{\frac{1}{3}}
 - \frac{\bar{p}_c}{3     \lp -\frac{\bar{q}_c}{2} + \sqrt{ \frac{\bar{q}_c^2}{4} + \frac{\bar{p}_c^3}{27} }  \rp^{\frac{1}{3}}  }
\nonumber \\
     \bar{a}_{c,2} &  =  &  \bar{a}_{c,1}  \frac{-1-\sqrt{-3}}{2}
     \nonumber \\
      \bar{a}_{c,3} &  =  &  \bar{a}_{c,1}  \frac{-1+\sqrt{-3}}{2}
      \nonumber \\
\bar{{\mathcal Z}}_i^{(sq)} &= &
\begin{cases}
  \{\bar{a}_{c,1},0 \}, & \mbox{if $\frac{\bar{q}_c^2}{4} + \frac{\bar{p}_c^3}{27} \geq 0$}  \\
  \{\bar{a}_{c,1},\bar{a}_{c,2},\bar{a}_{c,3},0 \} , & \mbox{otherwise}.
\end{cases},
 \label{eqsqadj:plhrd4}
\end{eqnarray}
 one has
  \begin{equation}
\min_{\z} \bar{\cL}^{(sq)}(r_y)
 = \hspace{-.12in}
\min_{\z_i\in\bar{{\mathcal Z}}_i^{(sq)}}  \lp  \sum_{i=1}^{m}
\lp \|\g_i^{(0)}|^2-|\z_i|^2 \|_2^2 + \|\g^{(0)}x   +  \g^{(1)}r -\z\|_2^2 \bar{r}_y\rp
+\gamma    - \gamma_{sph} \hspace{-.03in} - \frac{\sum_{i=1}^{n}\|\h^{(1)} \|_2^2 r^2 r_y^2 }{4\gamma_{sph}}   \rp.
\label{eq:sqadjplhrd6}
 \end{equation}
Combining  (\ref{eq:sqadjplhrd2}) and (\ref{eq:sqadjplhrd6}) we furthr obtain
\begin{eqnarray}
\bar{f}^{(sq)}_{rd}(c,x;\cG)
 & = &
\min_{\z,\gamma>0}\max_{\gamma_{sph}>0}
 \bar{\cL}(r_y)
  \nonumber \\
 & = &
\min_{\gamma>0}\max_{\gamma_{sph}>0}
  \Bigg(\Bigg.
   \min_{\z_i\in\bar{{\mathcal Z}}_i^{(sq)}}  \lp  \sum_{i=1}^{m}
\lp \|\g_i^{(0)}|^2-|\z_i|^2 \|_2^2 + \|\g^{(0)}x   +  \g^{(1)}r -\z\|_2^2 \bar{r}_y\rp
\rp
\nonumber \\
& & +\gamma   - \gamma_{sph} - \frac{\sum_{i=1}^{n}\|\h^{(1)} \|_2^2 r^2 r_y^2 }{4\gamma_{sph}}   \Bigg. \Bigg).
  \label{eq:sqadjplhrd7}
 \end{eqnarray}
 After applying  appropriate scaling   $\gamma \rightarrow \gamma n$, $\gamma_{sph}\rightarrow \gamma_{sph}n $, $r_y\rightarrow r_y\sqrt{n}$, concentrations, statistical identicalness over $i$, Lagrangian duality, and a combination of (\ref{eq:sqadjplta16}) and (\ref{eq:sqadjplhrd7}) give
\begin{eqnarray}
  \bar{\phi}^{(sq)}_0 & \triangleq & \max_{r_y>0}\lim_{n\rightarrow\infty} \frac{1}{n}
 \lp
 \frac{c_3}{2} r^2r_y^2 -
\frac{1}{c_3} \log \lp \mE_{\cG_{(2)}} e^{ - c_3 \bar{f}^{(sq)}_{rd}(\cG) } \rp   \rp
 \nonumber \\
& \geq &
\max_{r_y>0}\min_{\gamma>0} \max_{\gamma_{sph}>0}
\lp
 \frac{c_3}{2} r^2r_y^2 + \gamma
 -\frac{\alpha}{c_3} \log \lp \mE_{\cG} e^{ - c_3 \bar{f}^{(sq)}_{q}  } \rp
 - \gamma_{sph}  - \frac{1}{c_3} \log \lp \mE_{\cG_{(2)}} e^{ c_3 \frac{\lp \h_i^{(1)}\rp^2r^2r_y^2}{4\gamma_{sph}} }\rp
\rp, \nonumber \\
\label{eq:sqadjplhrd8}
 \end{eqnarray}
where
 \begin{eqnarray}
\bar{f}_q^{(sq)} & \triangleq & \mE_{\cG} \min_{|\z_i|\in{\bar{\mathcal Z}}_i^{(sq)}}\lp \||\g_i^{(0)}|^2-|\z_i|^2 \|_2^2
  +\||\g_i^{(0)}x   +  \g_i^{(1)}r| -|\z_i|\|_2^2 \bar{r}_y \rp.
 \label{eq:sqadjplhrd9}
 \end{eqnarray}
After setting
\begin{eqnarray}
f_{q}^{(sq,lift)}  & = &  \mE_{\cG} e^{-c_3 \bar{f}^{(sq)}_{q}}
\nonumber \\
 & = &
\int_{-\infty}^{\infty}
\int_{-\infty}^{\infty}
 \frac{e^{
 -c_3 \lp \min_{|\z_i|\in{\bar{\mathcal Z}}_i^{(sq)}}\lp \||\g_i^{(0)}|^2-|\z_i|^2 \|_2^2
  +\||\g_i^{(0)}x   +  \g_i^{(1)}r| -|\z_i|\|_2^2 \bar{r}_y \rp \rp
 -\frac{ \lp \g_i^{(0)} \rp^2 + \lp \g_i^{(1)} \rp^2     } {2}  }}{\sqrt{2\pi}^2} d\g_i^{(0)} d\g_i^{(1)}, \nonumber \\
  \label{eq:sqadjplhrd11}
 \end{eqnarray}
and recalling on (\ref{eq:plhrd12}), one obtains through a connection of (\ref{eq:sqadjplhrd8})-(\ref{eq:sqadjplhrd11}) the following \emph{squared magnitudes} analogue to (\ref{eq:plhrd14})
\begin{eqnarray}
   \bar{\phi}^{(sq)}_0
& \geq &
\max_{r_y>0}\min_{\gamma>0} \max_{\gamma_{sph}>0}
\lp
 \frac{c_3}{2} r^2r_y^2 + \gamma
 -\frac{\alpha}{c_3} \log \lp \mE_{\cG} e^{ - c_3 \bar{f}^{(sq)}_{q}  } \rp
 - \gamma_{sph}  - \frac{1}{c_3} \log \lp \mE_{\cG_{(2)}} e^{ c_3 \frac{\lp \h_i^{(1)}\rp^2r^2r_y^2}{4\gamma_{sph}} }\rp
\rp
 \nonumber \\
 & = &
\max_{r_y>0}\min_{\gamma>0}
\Bigg .\Bigg(
 \frac{c_3}{2} r^2r_y^2 + \gamma
 -\frac{\alpha}{c_3} \log \lp f_{q}^{(sq,lift)}\rp
 - \hat{\gamma}_{sph}  +\frac{1}{2c_3} \log \lp  1  -  \frac{c_{3,e}}{2\hat{\gamma}_{sph}}     \rp
\Bigg.\Bigg),
\label{eq:sqadjplhrd14}
 \end{eqnarray}
with  $\hat{\gamma}_{sph}$  as in (\ref{eq:plhrd13}). Keeping in mind that (\ref{eq:sqadjplhrd14}) holds for any $c_3$, additional maximization of the right hand side over $c_3$ further gives
\begin{eqnarray}
 \bar{\phi}^{(sq)}_0
& \geq &
 \max_{c_3> 0}  \max_{r_y>0}\min_{\gamma>0}
\Bigg .\Bigg(
 \frac{c_3}{2} r^2r_y^2 + \gamma
 -\frac{\alpha}{c_3} \log \lp f_{q}^{(sq,lift)}\rp
 - \hat{\gamma}_{sph}  +\frac{1}{2c_3} \log \lp  1  -  \frac{c_3rr_y  }{2\hat{\gamma}_{sph}}     \rp
\Bigg.\Bigg).
\label{eq:sqadjplhrd16a0}
 \end{eqnarray}

Numerical results regarding the above analyses are shown in Figure \ref{fig:fig12}. We conducted numerical evaluations for  $\alpha=1.4$. As we have seen in the previous sections,   the choice $c=1$ is critical for determining the phase transitions in the  non-squared magnitudes case. The same trend continues here and we selected $c=1$ for the numerical evaluations to visualize the effect of the above results. As figure  shows, the effect of lifted RDT is rather strong and it manages to flatten the curve so that there is no undesired local minimum in the lower left corner. As discussed earlier, this then implies that any norm constrained descending phase retrieval algorithm applied on the squared magnitudes $f_{plain} (\x)$ will not get trapped in a local optimum.

In Sections \ref{sec:ubrdt} and \ref{sec:liftrdt} we considered the non-squared magnitudes scenario. In such a context several underlying optimizations could be solved in closed form which made numerical evaluations substantially easier and likely more accurate. Here on the other hand, one does not have such a luxury
and a sizeable set of additional numerical integrations is needed which  may incur additional residual errors. Given that the lifted RDT curve plotted in Figure \ref{fig:fig12} is fairly flat (with the obtained difference in $\sqrt{\bar{\phi}^{(sq)}_0}$ often on the fifth decimal for a large range of $c$) it is a bit difficult to make a definite conclusion that $\alpha\approx 1.4$ is indeed sufficient for a nondecreasing behavior of $\sqrt{\bar{\phi}^{(sq)}_0}$. To be on a safe side in practice one should use slightly larger $\alpha$.  Nonetheless, a rather substantial lifting effect is observed. Moreover, it indicates that there is not that much of a conceptual algorithmic difference between non-squared and squared magnitudes in the objective $f_{plain}(\x)$.. Two key takeaway differences are: \textbf{\emph{(i)}}  Due to smoothness and continuity of the derivatives the squared option is easier to use in practical
implementations; and \textbf{\emph{(ii)}} As it admits closed form solutions to  some of the residual optimizations, the non-squared option is more useful in theoretical considerations. On the other hand, a common takeaway is the existence of fairly large flat portions of the contour lines.  Coupled with finite dimensions utilized in practical algorithmic running and consequential local jitteriness of the parametric manifolds this may easily tilt manifold's flat portions in an undesired direction ultimately causing emergence of unwanted funneling points and convergence of the descending algorithms to a non-global optimum. As stated above, it is then reasonable to use slightly larger values of $\alpha$ in practical running -- precisely as done in Figure \ref{fig:fig7}.

\begin{figure}[h]
\centering
\centerline{\includegraphics[width=1\linewidth]{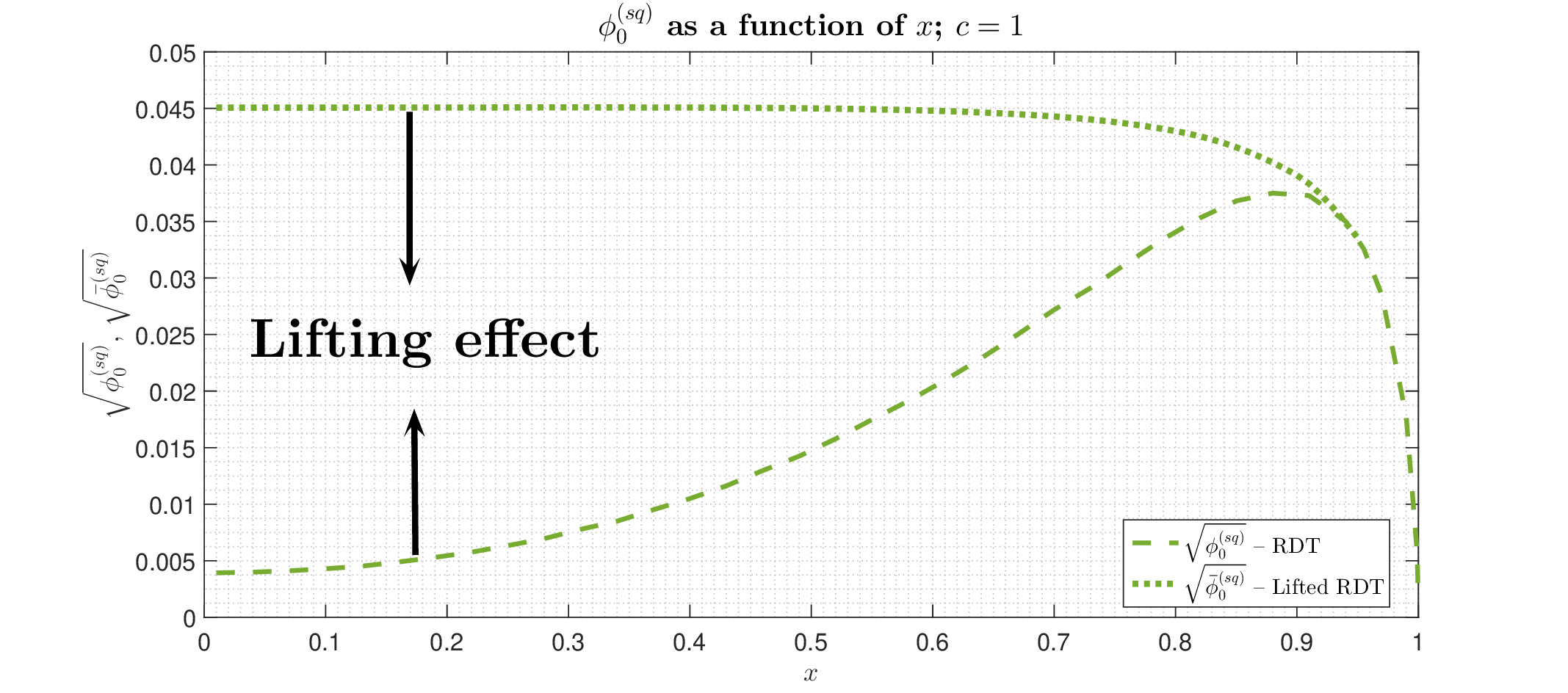}}
\caption{\emph{Squared magnitudes} -- effect of \emph{lifted} RDT ; $c=1$ and $\alpha=1.4$}
\label{fig:fig12}
\end{figure}

\section{Conclusion}
\label{sec:conc}

We considered the \emph{descending} phase retrieval algorithms and theoretically studied their performance. Relying on \emph{Random duality theory}  (RDT) we established a generic analytical program that allows precise statistical characterization of different algorithmic performance metrics. We introduced the concepts of \emph{parametric manifold} and its \emph{funneling points} and related them to the behavior of the underlying algorithms. In particular, we established an isomorphism between single funneling point manifolds structure  and generic global convergence of descending algorithms. Both plain and lifted RDT are then used to study the structure of the manifold. The impact of the sample complexity on the manifold structure is analyzed in detail and the emergence of a phase transition where manifold moves from a multi to a single funneling point structure is observed. The underlying isomorphism with the descending algorithms in return ensures that such a transition corresponds to having descending algorithms transition from the scenarios where they generically fail to the scenarios where they generically succeed in solving the phase retrieval.

We also developed and implemented a hybrid combination of a barrier gradient descent algorithm and a plain gradient descent. Despite the use of relatively small dimensions ($n=300$) (which is naturally  anticipated to imply strong manifold jitteriness effects), the  simulated and theoretical phase transitions are fairly close to each other. As the norm of the unknown vector  is one of the two manifold's parameters its critical role in both hybrid and plain gradient optimization processes is revealed and discussed in detail as well.

In a majority of our theoretical analyses we relied on the so-called non-squared magnitudes (basically amplitudes) objectives. They allowed for elegant closed form analytical considerations and made large portions of the underlying theoretical analyses more convenient. On the other hand, smoothness of the squared magnitudes derivatives was a bit more convenient for practical algorithmic implementations.  To ensure that practical implementations have their theoretical counterparts,  we conducted both RDT and lifted RDT theoretical analyses for squared magnitudes as well. Similar results are obtained as in non-squared case indicating that not much of a conceptual difference exists between the two..


A generic character of the developed methodologies allows for  many generalizations and further extensions. Clearly, from a theoretical point of view, studying further impact of fl RDT implementations is the first next logical step. On a more practical side though, the first next extensions include studying spectral initializers (since they are used as starting points in practical running), stability, robustness, and move to the complex domain. All concepts presented here can be used for any of these studies. The associated technical details are problem specific and we discuss them in separate companion papers.

\begin{singlespace}
\bibliographystyle{plain}
\bibliography{nflgscompyxRefs}
\end{singlespace}


\end{document}